\newcommand{\ie}{{{i.e.,}}\xspace}
\newcommand{\eg}{{{\em e.g.,}}\xspace}
\newcommand{\cmt}[1]{}
\newcommand{\ours}{{{our model}}\xspace}
\newcommand{\hadoop}{\textsc{Hadoop}\xspace}
\newcommand{\mapreduce}{\textsc{MapReduce}\xspace}
\newcommand{\spark}{\textsc{SPARK}\xspace}
\newcommand{\map}{\textsc{Map}\xspace}
\newcommand{\mapper}{\textsc{Mapper}\xspace}
\newcommand{\mappers}{\textsc{Mappers}\xspace}
\newcommand{\reduce}{\textsc{Reduce}\xspace}
\newcommand{\reducer}{\textsc{Reducer}\xspace}
\newcommand{\InfTuckerEx}{{InfTuckerEx}\xspace}
\newcommand{\InfTucker}{{InfTucker}\xspace}
\newcommand{\alanc}[1]{}
\newcommand{\email}[1]{\href{mailto:#1}{#1}}
\newcommand{\expt}[1]{\langle #1 \rangle}
\newcommand{\tr}{{\rm tr}}
\renewcommand{\vec}{{\rm vec}}
\renewcommand{\a}{{\bf a}}
\renewcommand{\b}{{\bf b}}
\renewcommand{\d}{{\rm d}}  
\newcommand{\f}{{\bf f}}
\newcommand{\bi}{{\bf i}}
\newcommand{\bj}{{\bf j}}
\renewcommand{\k}{{\bf k}}
\newcommand{\m}{{\bf m}}
\renewcommand{\u}{{\bf u}}
\renewcommand{\v}{{\bf v}}
\newcommand{\x}{{\bf x}}
\newcommand{\y}{{\bf y}}
\newcommand{\z}{{\bf z}}
\newcommand{\A}{{\bf A}}
\newcommand{\B}{{\bf B}}
\newcommand{\D}{{\bf D}}
\newcommand{\E}{{\bf E}}
\newcommand{\I}{{\bf I}}
\newcommand{\K}{{\bf K}}
\newcommand{\Mcal}{{\mathcal{M}}}
\newcommand{\N}{\mathcal{N}}  
\newcommand{\bupeta}{\boldsymbol{\upeta}}
\newcommand{\U}{{\bf U}}
\newcommand{\Ucal}{{\mathcal{U}}}
\newcommand{\Wcal}{{\mathcal{W}}}
\newcommand{\X}{{\bf X}}
\newcommand{\Ycal}{{\mathcal{Y}}}
\newcommand{\blambda}{\boldsymbol{\lambda}}
\newcommand{\bLambda}{\mathbf{\Lambda}}
\newcommand{\btheta}{\boldsymbol{\theta}}
\newcommand{\bSigma}{\boldsymbol{\Sigma}}
\newcommand{\bmu}{\boldsymbol{\mu}}
\newcommand{\0}{{\bf 0}}
\newcommand{\ben}{\begin{enumerate}}
\newcommand{\een}{\end{enumerate}}
\begin{document} 

\newtheorem{theorem}{Theorem}[section]
\newtheorem{corollary}{Corollary}[theorem]
\newtheorem{lem}[theorem]{Lemma}

\title{Distributed Flexible Nonlinear Tensor Factorization}
\author{
       Shandian Zhe\\
       Purdue University\\
       \email{szhe@purdue.edu}
       \and
       Kai Zhang\\
       Lawrence Berkeley Lab \\
       \email{kzhang980@gmail.com}
       \and
        Pengyuan Wang \\
        Yahoo! Research \\
        \email{pengyuan@yahoo-inc.com}
        \and
        Kuang-chih Lee\\
        Yahoo! Research\\
        \email{kclee@yahoo-inc.com}
        \and
        Zenglin Xu\\
		University of Electronic Science and Technology of China\\
		\email{zlxu@uestc.edu.cn}
		\and
		Yuan Qi\\
		Purdue University\\
		\email{alanqi@cs.purdue.edu}
		\and
		Zoubin Ghahraman\\
		University of Cambridge\\
		\email{zoubin@eng.cam.ac.uk}
}

\maketitle

\begin{abstract}
Tensor factorization is a powerful tool to analyse multi-way data. Compared with traditional multi-linear methods, nonlinear tensor factorization models are capable of capturing more complex relationships in the data. However, they are computationally expensive and may suffer severe learning bias in case of extreme data sparsity. To overcome these limitations,  in this paper we propose a distributed, flexible nonlinear tensor factorization model. 
Our model can effectively avoid the expensive computations and structural restrictions of the Kronecker-product in existing TGP formulations, allowing an arbitrary subset of tensorial entries to be selected to contribute to the training. At the same time, we derive a tractable and tight variational evidence lower bound (ELBO) that enables highly decoupled, parallel computations and high-quality inference.
Based on the new bound, we develop a distributed inference algorithm in the \mapreduce framework, which is key-value-free and can fully exploit the memory cache mechanism in fast \mapreduce systems such as \spark. 
Experimental results fully demonstrate the advantages of our method over several state-of-the-art approaches,  in terms of both predictive performance and computational efficiency. Moreover, our approach shows a promising potential in the application of Click-Through-Rate (CTR) prediction for online advertising.

\end{abstract}
 
\section{Introduction}
Tensors, or multidimensional arrays, are generalizations of matrices (from binary interactions) to high-order interactions between multiple entities. For example, we can extract a three-mode tensor (\textit{user}, \textit{advertisement}, \textit{context}) from online advertising data. To analyze tensor data, people usually turn to factorization approaches that use a set of latent factors to represent each entity and model how the latent factors interact with each other to generate tensor elements. Classical tensor factorization models include Tucker ~\citep{Tucker66} and CANDECOMP/PARAFAC (CP)~\citep{Harshman70parafac} decompositions, which have been widely used in real-world applications. However, because they all assume a multi-linear interaction between the latent factors, they are unable to capture more complex, nonlinear relationships. Recently, ~\citet{XuYQ12} proposed Infinite Tucker decomposition (InfTucker), which generalizes the Tucker model to infinite feature space using a Tensor-variate Gaussian process (TGP) and thus is powerful to model intricate nonlinear interactions. However, InfTucker and its variants~\citep{zhe2013dintucker, zhe2015scalable} are computationally expensive, because the Kronecker product between the covariances of all the modes requires the TGP to model the entire tensor structure. 
In addition, they may suffer from the extreme sparsity of real-world tensor data, \ie when the proportion of the nonzero entries is extremely low. As is often the case, most of the zero elements in real tensors are meaningless: they simply indicate missing or unobserved entries. Incorporating all of them in the training process may affect the factorization quality and lead to biased predictions.


To address these issues, in this paper we propose a distributed, flexible nonlinear tensor factorization model, which has several important advantages. First, it can capture highly nonlinear interactions in the tensor, and is flexible enough to incorporate arbitrary subset of (meaningful) tensorial entries for the training.
This is achieved by placing Gaussian process priors over tensor entries, where the input is constructed by concatenating the latent factors from each mode and the intricate relationships are captured by using the kernel function. By using such a construction, the covariance function is then free of the Kronecker-product structure, and as a result users can freely choose any subset of tensor elements for the training process and incorporate prior domain knowledge. For example, one can choose a combination of balanced zero and nonzero elements to overcome the learning bias. 
 Second, the tight variational evidence lower bound (ELBO) we derived using functional derivatives and convex conjugates subsumes optimal variational posteriors, thus evades inefficient, sequential E-M updates and enables highly efficient, parallel computations as well as improved inference quality. Moreover, the new bound allows us to develop a distributed, gradient-based optimization algorithm. 
 Finally, we develop a simple yet very efficient procedure to avoid the data shuffling operation, a major performance bottleneck in the (key-value) sorting procedure in \mapreduce.
 That is, rather than sending out key-value pairs, each mapper simply calculates and sends a global gradient vector without keys. This key-value-free procedure is general and can effectively prevent massive disk IOs and fully exploit the memory cache mechanism in fast \mapreduce systems, such as \spark.

Evaluation using small real-world tensor data have fully demonstrated the superior prediction accuracy of our model in comparison with \InfTucker and other state-of-the-art.
On large tensors with millions of nonzero elements, our approach is significantly better than, or at least as good as two popular large-scale nonlinear factorization methods based on TGP: one uses hierarchical modeling to perform distributed infinite Tucker decomposition~\citep{zhe2013dintucker}; the other further enhances InfTucker by using Dirichlet process mixture prior over the latent factors and employs an online learning scheme~\citep{zhe2015scalable}. Our method also outperforms GigaTensor~\citep{kang2012gigatensor}, a typical large-scale CP factorization algorithm, by a large margin. In addition, our method achieves faster training speed and enjoys almost linear scalability on the number of computational nodes. We apply our model to CTR prediction for online advertising and achieves a significant, $20\%$ improvement over the popular logistic regression and linear SVM approaches.



\section{Background}
We first introduce the background knowledge. For convenience, we will use the same notations in \citep{XuYQ12}. Specifically, we denote a $K$-mode tensor by $\Mcal \in \mathbb{R}^{d_1 \times \ldots \times d_K}$, where the $k$-th mode is of dimension $d_k$. The tensor entry at location $\bi$ ($\bi=(i_1,\ldots, i_K)$) is denoted by $m_{\mathbf{i}}$.  
To introduce Tucker decomposition, we need to generalize matrix-matrix products to tensor-matrix products. Specifically, a tensor $\mathcal{W} \in \mathbb{R}^{r_1 \times \ldots \times r_K}$ can multiply with a matrix $\U \in \mathbb{R}^{s \times t}$ at mode $k$ when its dimension at mode-$k$ is consistent with the number of columns in $\U$, \ie $r_k = t$. The product is a new tensor, with size $r_1 \times \ldots \times r_{k-1} \times s \times r_{k+1} \times \ldots \times r_K$. Each element is calculated by $(\Wcal \times_k \U)_{i_1\ldots i_{k-1} j i_{k+1}\ldots i_K} = \sum_{i_k=1}^{r_k} w_{i_1\ldots i_K}u_{ji_k}.$

The Tucker decomposition model uses a latent factor matrix $\U_k \in \mathbb{R}^{d_k \times r_k}$ in each mode $k$ and a core tensor $\mathcal{W} \in \mathbb{R}^{r_1 \times \ldots \times r_K}$ and assumes the whole tensor $\Mcal$ is generated by $\Mcal = \Wcal \times_1 \U^{(1)} \times_2 \ldots \times_K \U^{(K)}$.
Note that this is a multilinear function of $\Wcal$ and $\{\U_1, \ldots, \U_K\}$. It can be further simplified by restricting $r_1 = r_2 = \ldots = r_K$ and the off-diagonal elements of $\Wcal$ to be $0$. In this case, the Tucker model becomes CANDECOMP/PARAFAC (CP).

The infinite Tucker decomposition (\InfTucker) generalizes the Tucker model to infinite feature space via a tensor-variate Gaussian process (TGP)~\citep{XuYQ12}. Specifically, in a probabilistic framework, we assign a standard normal prior over each element of the core tensor $\Wcal$, and then marginalize out $\Wcal$ to obtain the probability of the tensor given the latent factors:
\begin{align}
p(\Mcal|\U^{(1)},\ldots,\U^{(K)}) =\N(\vec(\Mcal);\0, \Sigma^{(1)}\otimes \ldots \otimes \Sigma^{(K)})\label{eq:tensor-gp}
\end{align}
where $\vec{(\Mcal)}$ is the vectorized whole tensor,  $\Sigma^{(k)} = \U^{(k)}{\U^{(k)}}^\top$ and $\otimes$ is the Kronecker-product. Next, we apply the kernel trick to model nonlinear interactions between the latent factors:  Each row $\u_t^k$ of the latent factors $\U^{(k)}$ is replaced by a nonlinear feature transformation $\phi(\u_t^k)$ and thus an equivalent nonlinear covariance matrix $\Sigma^{(k)} = k(\U^{(k)},\U^{(k)})$ is used to replace $\U^{(k)}{\U^{(k)}}^\top$, where $k(\cdot,\cdot)$ is the covariance function. After the nonlinear feature mapping, the original Tucker decomposition is performed in an (unknown) infinite feature space. Further, since the covariance of $\vec(\Mcal)$ is a function of the latent factors $\Ucal = \{\U^{(1)}, \ldots, \U^{(K)}\}$, Equation \eqref{eq:tensor-gp} actually defines a Gaussian process (GP) on tensors, namely tensor-variate GP (TGP) \citep{XuYQ12}, where the input are based on  $\Ucal$. Finally, we can use different noisy models $p(\Ycal | \Mcal)$ to sample the observed tensor $\Ycal$. For example, we can use Gaussian models and Probit models for continuous and binary observations, respectively.

\section{Model}
Despite being able to capture nonlinear interactions, \InfTucker may suffer from the extreme sparsity issue in real-world tensor data sets. 
The reason is that its full covariance is a Kronecker-product between the covariances over all the modes---$\{\Sigma^{(1)},\ldots, \Sigma^{(K)}\}$ (see  Equation \eqref{eq:tensor-gp}). Each $\Sigma^{(k)}$ is of size $d_k \times d_k$ and the full covariance is of size $\prod_{k}d_k \times \prod_{k}d_k$. 
Thus TGP is projected onto the entire tensor with respect to the latent factors $\Ucal$, including all zero and nonzero elements, rather than a (meaningful) subset of them.
However, the real-world tensor data are usually extremely sparse, with a huge number of zero entries and a tiny portion of nonzero entries.  On one hand, because most zero entries are meaningless---they are either missing or unobserved, using them can adversely affect the tensor factorization quality and lead to biased predictions; on the other hand, incorporating numerous zero entries into GP models will result in large covariance matrices and high computational costs. Although \citet{zhe2013dintucker,zhe2015scalable} proposed to improve the scalability by modeling subtensors instead, the sampled subtensors can still be very sparse. Even worse, because subtensors are typically restricted to a small dimension due to the efficiency considerations, it is often possible to encounter one that does not contain any nonzero entry. This may further incur numerical instabilities in model estimation.

To address these issues, we propose a flexible Gaussian process tensor factorization model. While inheriting the nonlinear modeling power, our model disposes of the Kronecker-product structure in the full covariance and can therefore select  an arbitrary subset of tensor entries for training. 

Specifically, given a tensor $\Mcal\in  \mathbb{R}^{d_1 \times \ldots \times d_K}$,  for each tensor entry $m_\bi$ ($\bi=(i_1,\ldots, i_K)$), we construct an input $\x_i$ by concatenating the corresponding latent factors from all the modes: $\x_\bi=[\u^{(1)}_{i_1}, \ldots, \u^{(K)}_{i_K}]$, where $\u^{(k)}_{i_k}$ is the $i_k$-th row in the latent factor matrix $\U^{(k)}$ for mode $k$. 
We assume that  there is an underlying function $f:\mathbb{R}^{\sum_{j=1}^K d_j}\rightarrow \mathbb{R}$ such that $m_\bi = f(\x_\bi) = f([\u^{(1)}_{i_1}, \ldots, \u^{(K)}_{i_K}])$.
This function is unknown and can be complex and nonlinear. To learn the function, we assign a Gaussian process prior over $f$:  for any set of tensor entries  $S=\{\bi_1, \ldots, \bi_N\}$,  the function values $\f_{S}=\{f(\x_{\bi_1}),\ldots, f(\x_{\bi_N})\}$ are distributed according to a multivariate Gaussian distribution with mean $\0$ and covariance determined by $\X_S=\{\x_{\bi_1}, \ldots, \x_{\bi_N}\}$:
\[
p(\f_{S}|\Ucal) = \N(\f_{S} | \0, k(\X_S, \X_S))
\]
where $k(\cdot, \cdot)$ is a (nonlinear) covariance function. 

Because $k(\x_\bi, \x_\bj)=k([\u^{(1)}_{i_1},\ldots, \u^{(K)}_{i_K}],[\u^{(1)}_{j_1},\ldots, \u^{(K)}_{j_K}])$, there is no Kronecker-product structure constraint and so any subset of tensor entries can be selected for training. To prevent the learning process to be biased toward zero, we can use a set of entries with balanced zeros and nonzeros. Furthermore, useful domain knowledge can also be incorporated to select meaningful entries for training. Note, however, that if we still use all the tensor entries and intensionally impose the Kronecker-product structure in the full covariance, our model is reduced to InfTucker. Therefore, from the modeling perspective, the proposed model is more general.

We further assign a standard normal prior over the latent factors $\Ucal$. Given the selected tensor entries $\m=[m_{\bi_1}, \ldots, m_{\bi_N}]$, the observed entries $\y=[y_{\bi_1}, \ldots, y_{\bi_N}]$ are sampled from a noise model $p(\y | \m)$. In this paper, we deal with both continuous and binary observations. For continuous data, we use the Gaussian model, $p(\y|\m) = \N(\y|\m, \beta^{-1}\I)$ and the joint probability is
\begin{align}
p(\y, \m, \Ucal) &=  \prod\nolimits_{t=1}^K \N(\vec(\U^{(t)})|\0, \I)  \N(\m|\0, k(\X_S, \X_S))  \N(\y|\m, \beta^{-1}\I) \label{eq:joint_prob_reg}
\end{align}
where $S = [\bi_1, \ldots, \bi_N]$. For binary data, we use the Probit model in the following manner. We first introduce augmented variables $\z=[z_1,\ldots, z_N]$ and then decompose the Probit model into
$p(z_j | m_{\bi_j}) = \N(z_j|m_{\bi_j},1)$ and $p(y_{\bi_j}|z_j) = \mathbbm{1}(y_{\bi_j}=0)\mathbbm{1}(z_j \le 0) + \mathbbm{1}(y_{\bi_j}=1)\mathbbm{1}(z_j>0)$ where $\mathbbm{1}(\cdot)$ is the indicator function. Then the joint probability  is
\begin{align}
&p(\y,\z, \m, \Ucal) = \prod\nolimits_{t=1}^K \N(\vec(\U^{(t)})|\0, \I)  \N(\m|\0, k(\X_S, \X_S)) \N(\z|\m, \I) \nonumber \\
\cdot & \prod\nolimits_j \mathbbm{1}(y_{\bi_j}=0)\mathbbm{1}(z_j \le 0) + \mathbbm{1}(y_{\bi_j}=1)\mathbbm{1}(z_j>0). \label{eq:joint_prob_binary}
\end{align}

\section{Distributed Variational Inference}
Real-world tensor data often comprise a large number of entries, say, millions of non-zeros and billions of zeros. Even by only using nonzero entries for training, exact inference of the proposed model may still be intractable. This motivates us to develop a distributed variational inference algorithm, presented as follows.
\subsection{Tractable Variational Evidence Lower Bound}
Since the GP covariance term --- $k(\X_S, \X_S)$ (see Equations \eqref{eq:joint_prob_reg} and \eqref{eq:joint_prob_binary}) intertwines all the latent factors, exact inference in parallel is  difficult. Therefore, we first derive a tractable variational evidence lower bound (ELBO), following the sparse Gaussian process framework by \citet{titsias2009variational}. The key idea is to introduce a small set of inducing points  $\B=\{\b_1, \ldots, \b_p\}$ and latent targets $\v=\{v_1, \ldots, v_p\}$ ($p \ll N$). Then we augment the original model with a joint multivariate Gaussian distribution of the latent tensor entries  $\m$ and targets $\v$,
\begin{align}
p(\m,\v | \Ucal, \B) = \N(\left[\begin{array}{c}\m \\ \v\end{array}\right] ; \left[\begin{array}{c}\0 \\ \0 \end{array}\right] , \left[\begin{array}{cc}\K_{SS} & \K_{SB} \\ \K_{BS} & \K_{BB}  \end{array}\right]) \nonumber
\end{align}
where $\K_{SS} = k(\X_S, \X_S)$, $\K_{BB} = k(\B, \B)$, $\K_{SB} = k(\X_S, \B)$ and $\K_{BS} = k(\B, \X_S)$. We use Jensen's inequality and conditional Gaussian distributions to construct the ELBO. Using a very similar derivation to \citep{titsias2009variational}, we can obtain a tractable ELBO for our model on continuous data, $\log\big(p(\y, \Ucal | \B)\big) \ge L_1\big(\Ucal, \B, q(\v)\big)$ , where
\begin{align}
L_1\big(\Ucal, \B, q(\v)\big) &= \log(p(\Ucal)) + \int q(\v) \log\frac{ p(\v|\B) }{ q(\v)} \d \v \nonumber \\& +  \sum\nolimits_j \int q(\v) F_\v(y_{\bi_j},\beta) \d \v. \label{eq:elbo_r}
\end{align}
\cmt{
\begin{align}
&\log(p(\y, \Ucal | \B)) \ge \log(p(\Ucal)) + \int q(\v) \log\frac{ p(\v|\B) }{ q(\v)} \d \v \nonumber \\
&+  \sum\nolimits_j \int q(\v) F_\v(y_{\bi_j},\beta) \d \v  \triangleq L_1(\Ucal, \B, q(\v)) , \label{eq:elbo_r}
\end{align}
}
Here $p(\v|\B) = \N(\v|\0, K_{BB})$, $q(\v)$ is the variational posterior for the latent targets $\v$ and
$F_\v(\cdot_j, *) =  \int \log\big(\N(\cdot_j | m_{\bi_j}, *) \big)\N(m_{\bi_j}|\mu_j, \sigma_j^2)\d m_{\bi_j}$,
where $\mu_j = k(\x_{\bi_j}, \B)\K_{BB}^{-1}\v$ and $\sigma_j^2 =\bSigma(j,j)  = k(\x_{\bi_j}, \x_{\bi_j}) - k(\x_{\bi_j}, \B)\K_{BB}^{-1}k(\B, \x_{\bi_j}) $. 
Note that $L_1$ is decomposed into a summation of terms involving individual tensor entries $\bi_j (1 \le j \le N)$.  The additive form enables us to distribute the computation across multiple computers.

For binary data, we introduce a variational posterior $q(\z)$ and make the mean-field assumption that $q(\z)=\prod_j q(z_j)$. Following a similar derivation to the continuous case, we can obtain a tractable ELBO for binary data, $\log\big(p(\y,\Ucal|\B)\big) \ge L_2\big(\Ucal, \B, q(\v),q(\z)\big)$, where
\begin{align}
& L_2\big(\Ucal, \B, q(\v),q(\z)\big) = \log(p(\Ucal)) + \int q(\v) \log( \frac{p(\v|\B)}{q(\v)} )\d \v   \nonumber\\
&+\sum\nolimits_j q(z_j)\log(\frac{p(y_{\bi_j}|z_j)}{q(z_j)}) + \sum\nolimits_j  \int q(\v) \int q(z_j)F_\v(z_j, 1) \d z_j \d \v  \label{eq:elbo_b}. \raisetag{1in}
\end{align}

One can simply use the standard Expectation-maximization (EM) framework to optimize \eqref{eq:elbo_r} and \eqref{eq:elbo_b} for model inference, i.e., the E step updates the variational posteriors $\{q(\v), q(\z)\}$ and the M step updates the latent factors $\Ucal$, the inducing points $\B$ and the kernel parameters. However, the sequential E-M updates can not fully exploit the paralleling computing resources. Due to the strong dependencies between the E step and the M step, the sequential E-M updates may  take a large number of iterations to converge. Things become worse for binary case: in the E step, the updates of $q(\v)$ and $q(\z)$ are also dependent on each other, making a parallel inference even less efficient.

\subsection{Tight and Parallelizable Variational Evidence Lower Bound}
\cmt{
Given the tractable ELBOs in \eqref{eq:elbo_r} and \eqref{eq:elbo_b}, we can perform model inference simply following the standard Expectation-maximization (EM) framework, where the E step updates the variational posteriors $q(\v)$ (and $q(\z)$ for binary data) and the M step updates the latent factors $\Ucal$, the pseudo-inputs $\B$ and the kernel parameters. While each E or M step may be performed in parallel, there still exists numerous dependencies among them and so the converge can be slow. Things become worse for binary case: in the E step, the updates of $q(\v)$ and $q(\z)$ are dependent on each other, making a parallel inference even less efficient.
}
In this section,  we further derive tight(er) ELBOs that subsume the optimal variational posteriors for $q(\v)$ and $q(\z)$. Thereby we can avoid the sequential E-M updates to perform decoupled, highly efficient parallel inference. Moreover, the inference quality is very likely to be improved using tighter bounds. Due to the space limit, we only present key ideas and results here. Detailed discussions are given in Section 1 of the supplementary material.

\noindent \textbf{Tight ELBO for continuous tensors.} We take functional derivative of $L_1$ with respect to $q(\v)$ in \eqref{eq:elbo_r}. By setting the derivative to zero, we obtain the optimal $q(\v)$ (which is a Gaussian distribution) and then substitute it into $L_1$, manipulating the terms, we achieve the following tighter ELBO.

\begin{theorem}\label{th1}
For continuous data, we have
\begin{align}
 &\log\big(p(\y,\Ucal|\B)\big) \ge L_1^*(\Ucal, \B) =  \frac{1}{2}\log |\K_{BB}| - \frac{1}{2}\log|\K_{BB} + \beta\A_1|-\frac{1}{2}\beta a_2- \frac{1}{2}\beta a_3 \nonumber \\
 & + \frac{\beta}{2}\tr(\K_{BB}^{-1}\A_1)-\frac{1}{2}\sum_{k=1}^K \| \U^{(k)} \|^2_F + \frac{1}{2}\beta^2 \a_4^\top(\K_{BB}+\beta\A_1)^{-1}\a_4 + \frac{N}{2}\log(\frac{\beta}{2\pi}), \label{eq:t_lb_r}
 \end{align}
 where $\|\cdot \|_F$ is Frobenius norm, and
  \begin{align}
  \A_1 &= \sum\nolimits_j k(\B, \x_{\bi_j}) k(\x_{\bi_j}, \B), \;\;\;\;\;\;\;\;\;\; a_2 = \sum\nolimits_j y_{\bi_j}^2, \nonumber \\
  a_3 &= \sum\nolimits_j k(\x_{\bi_j}, \x_{\bi_j}), \;\;\;\;\;\;\;\;\;\;  \a_4 = \sum\nolimits_j k(\B, \x_{\bi_j}) y_{\bi_j} \nonumber.
  \end{align}
\end{theorem}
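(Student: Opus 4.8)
The plan is to eliminate the variational posterior $q(\v)$ analytically by maximizing the tractable bound $L_1$ in \eqref{eq:elbo_r} over all distributions $q(\v)$, and then to identify $L_1^*$ with the optimized value $\max_{q(\v)} L_1$. Since the family bound $\log p(\y,\Ucal|\B)\ge L_1(\Ucal,\B,q(\v))$ holds for \emph{every} $q(\v)$, evaluating it at the maximizer keeps it a valid lower bound while making it as tight as possible within this variational family. The crucial structural observation is that $\mu_j = k(\x_{\bi_j},\B)\K_{BB}^{-1}\v$ is linear in $\v$ whereas $\sigma_j^2$ is independent of $\v$, so each inner integral $F_\v(y_{\bi_j},\beta)$ reduces to a quadratic function of $\v$, and the whole $\v$-dependence of $L_1$ is quadratic.

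First I would compute $F_\v(y_{\bi_j},\beta)$ in closed form. Using $\log\N(y_{\bi_j}\mid m_{\bi_j},\beta^{-1}) = \half\log(\beta/2\pi) - \frac{\beta}{2}(y_{\bi_j}-m_{\bi_j})^2$ and integrating $m_{\bi_j}\sim\N(\mu_j,\sigma_j^2)$ gives $F_\v(y_{\bi_j},\beta) = \half\log(\beta/2\pi) - \frac{\beta}{2}\big[(y_{\bi_j}-\mu_j)^2+\sigma_j^2\big]$. Expanding $(y_{\bi_j}-\mu_j)^2$, substituting $\mu_j=k(\x_{\bi_j},\B)\K_{BB}^{-1}\v$, and summing over $j$, I would collect the result into the aggregate statistics: the quadratic coefficient is governed by $\A_1=\sum_j k(\B,\x_{\bi_j})k(\x_{\bi_j},\B)$, the linear coefficient by $\a_4=\sum_j k(\B,\x_{\bi_j})y_{\bi_j}$, the data-fit constant by $a_2=\sum_j y_{\bi_j}^2$, and the variance sum simplifies via $\sum_j\sigma_j^2 = a_3 - \tr(\K_{BB}^{-1}\A_1)$ (using $\sum_j k(\x_{\bi_j},\B)\K_{BB}^{-1}k(\B,\x_{\bi_j}) = \tr(\K_{BB}^{-1}\A_1)$), which produces the trace term.

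Next I would carry out the optimization over $q(\v)$ exactly as the theorem's preamble prescribes. Setting the functional derivative $\delta L_1/\delta q(\v)=0$ for the functional $L_1 = \log p(\Ucal) - \mathrm{KL}\big(q(\v)\,\|\,p(\v|\B)\big) + \expec{q(\v)}{\sum\nolimits_j F_\v(y_{\bi_j},\beta)}$ shows that the optimum is Gaussian, $q^*(\v)\propto p(\v|\B)\exp\big(\sum_j F_\v\big)$, with precision $\K_{BB}^{-1}+\beta\K_{BB}^{-1}\A_1\K_{BB}^{-1}$. Back-substituting, the optimized value equals $\log p(\Ucal)$ plus the $\v$-independent constants plus the Gaussian log-normalizer $\log\int p(\v|\B)\exp(g(\v))\,\d\v$, where $g$ is the $\v$-dependent quadratic. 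Completing the square in this standard Gaussian integral, together with the determinant identity $\big|\K_{BB}^{-1}(\K_{BB}+\beta\A_1)\K_{BB}^{-1}\big| = |\K_{BB}|^{-2}\,|\K_{BB}+\beta\A_1|$, yields the log-determinant pair $\half\log|\K_{BB}| - \half\log|\K_{BB}+\beta\A_1|$, while simplifying the mean term (the $\K_{BB}^{-1}\K_{BB}$ factors cancel to $\I$) yields $\half\beta^2\a_4^\top(\K_{BB}+\beta\A_1)^{-1}\a_4$. Finally, adding $\log p(\Ucal) = -\half\sum_k\norm{\U^{(k)}}_F^2 + \text{const}$ and the collected constants $\frac{N}{2}\log(\beta/2\pi) - \frac{\beta}{2}a_2 - \frac{\beta}{2}a_3 + \frac{\beta}{2}\tr(\K_{BB}^{-1}\A_1)$ reproduces \eqref{eq:t_lb_r}.

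I expect the main obstacle to be bookkeeping rather than conceptual: carefully tracking the matrix algebra so that the $\K_{BB}^{-1}$ factors introduced through $\mu_j$ cancel correctly against the $\K_{BB}$ from the prior $p(\v|\B)=\N(\v|\0,\K_{BB})$, leaving the clean inverse $(\K_{BB}+\beta\A_1)^{-1}$ in \emph{both} the determinant and the quadratic form. A secondary subtlety is justifying that the unconstrained functional optimum is attained by a proper, normalizable Gaussian; this holds because $\A_1\succeq\0$ forces the precision $\K_{BB}^{-1}+\beta\K_{BB}^{-1}\A_1\K_{BB}^{-1}$ to be positive definite, so the Gaussian integral converges and the manipulations above are valid.
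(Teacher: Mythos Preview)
Your proposal is correct and follows essentially the same route as the paper: compute the functional derivative of $L_1$ with respect to $q(\v)$, identify the optimizer as a Gaussian (the paper writes it in covariance form $\bLambda=\K_{BB}(\K_{BB}+\beta\A_1)^{-1}\K_{BB}$, equivalent to your precision $\K_{BB}^{-1}+\beta\K_{BB}^{-1}\A_1\K_{BB}^{-1}$), and substitute back to obtain $L_1^*$. Your organization via the Gaussian log-normalizer $\log\int p(\v|\B)e^{g(\v)}\d\v$ is a minor presentational variant of the paper's direct substitution of $(\bmu,\bLambda)$, and your remark that $\A_1\succeq\0$ guarantees a proper optimizer is a welcome detail the paper leaves implicit.
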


\noindent \textbf{Tight ELBO for binary tensors.} The binary case is more difficult because $q(\v)$ and $q(\z)$ are coupled together (see \eqref{eq:elbo_b}). We use the following steps: we first fix $q(\z)$ and plug the optimal $q(\v)$ in the same way as the continuous case. Then we obtain an intermediate ELBO $\hat{L}_2$ that only contains $q(\z)$. However, a quadratic term in $\hat{L}_2$ , $\frac{1}{2}(\K_{BS}\expt{\z})^\top(\K_{BB}+\A_1)^{-1}(\K_{BS}\expt{\z})$, intertwines all $\{q(z_j)\}_j$ in $\hat{L}_2$, making it infeasible to analytically derive or parallelly compute the optimal $\{q(z_j)\}_j$. To overcome this difficulty, we exploit the convex conjugate of the quadratic term to introduce an extra variational parameter $\blambda$ to decouple the dependences between $\{q(z_j)\}_j$. After that, we are able to derive the optimal $\{q(z_j)\}_j$ using functional derivatives and to obtain the following tight ELBO.
\begin{theorem}\label{th2}
For binary data, we have
\begin{align}
 &\log\big(p(\y,\Ucal|\B)\big) \ge L_2^*(\Ucal, \B, \blambda) =  \frac{1}{2}\log |\K_{BB}| - \frac{1}{2}\log|\K_{BB} + {\A}_1| - \frac{1}{2} {a}_3\nonumber \\
&+ \sum_j \log\big(\Phi((2y_{\bi_j}-1)\blambda^\top k(\B, \x_{\bi_j}))\big)- \frac{1}{2}\blambda^\top\K_{BB}\blambda  + \frac{1}{2}\tr(\K_{BB}^{-1}{\A}_1) \nonumber \\
&-\frac{1}{2}\sum_{k=1}^K \| \U^{(k)} \|^2_F \label{eq:t_lb_b}
\end{align}
where $\Phi(\cdot)$ is the cumulative distribution function of the standard Gaussian.
\end{theorem}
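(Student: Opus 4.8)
The plan is to obtain $L_2^*$ by analytically maximising the tractable bound $L_2$ of \eqref{eq:elbo_b} over the variational posteriors, optimising $q(\v)$ first and $q(\z)$ second, with a convex-conjugate step inserted in between to break the coupling. First I would make $F_\v(z_j,1)$ explicit: since it is the expectation of $\log\N(z_j|m_{\bi_j},1)$ under $\N(m_{\bi_j}|\mu_j,\sigma_j^2)$, it equals $-\tfrac12\log(2\pi)-\tfrac12\sigma_j^2-\tfrac12(z_j-\mu_j)^2$, with $\mu_j=k(\x_{\bi_j},\B)\K_{BB}^{-1}\v$ linear in $\v$ and $\sigma_j^2=k(\x_{\bi_j},\x_{\bi_j})-k(\x_{\bi_j},\B)\K_{BB}^{-1}k(\B,\x_{\bi_j})$ independent of $\v$. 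Summing the $\sigma_j^2$ already yields the terms $-\tfrac12 a_3+\tfrac12\tr(\K_{BB}^{-1}\A_1)$ via $\sum_j\sigma_j^2=a_3-\tr(\K_{BB}^{-1}\A_1)$.

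Next I would collect the $\v$-dependent part. Writing $\c=\K_{BS}\expt{\z}=\sum_j\expt{z_j}k(\B,\x_{\bi_j})$ and recalling $\A_1=\sum_j k(\B,\x_{\bi_j})k(\x_{\bi_j},\B)$, the $q(\v)$ terms read $\int q(\v)\log\frac{p(\v|\B)}{q(\v)}\d\v+\int q(\v)\big(\c^\top\K_{BB}^{-1}\v-\tfrac12\v^\top\K_{BB}^{-1}\A_1\K_{BB}^{-1}\v\big)\d\v$. This is a negative KL plus a linear--quadratic form, so the optimal $q(\v)$ is Gaussian and the maximised value is the log-partition $\log\int p(\v|\B)\exp\!\big(\c^\top\K_{BB}^{-1}\v-\tfrac12\v^\top\K_{BB}^{-1}\A_1\K_{BB}^{-1}\v\big)\d\v$. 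Evaluating this Gaussian integral (the same computation as in Theorem~\ref{th1} with $\beta=1$ and $\a_4$ replaced by $\c$) gives $\tfrac12\log|\K_{BB}|-\tfrac12\log|\K_{BB}+\A_1|+\tfrac12\c^\top(\K_{BB}+\A_1)^{-1}\c$, whose last term is the coupling quadratic flagged in the text.

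The hard part is that quadratic $\tfrac12\c^\top(\K_{BB}+\A_1)^{-1}\c$, which ties all $q(z_j)$ together through $\c=\K_{BS}\expt{\z}$. Here I would invoke Fenchel duality for the convex map $\c\mapsto\tfrac12\c^\top\M^{-1}\c$ with $\M=\K_{BB}+\A_1$, namely $\tfrac12\c^\top\M^{-1}\c=\max_{\blambda}\big(\blambda^\top\c-\tfrac12\blambda^\top\M\blambda\big)$, so that for every fixed $\blambda$ one retains a valid lower bound in which the cross term splits as $\blambda^\top\c=\sum_j\expt{z_j}\,\blambda^\top k(\B,\x_{\bi_j})$, now additive in $j$. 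Setting $\alpha_j:=\blambda^\top k(\B,\x_{\bi_j})$, each per-entry contribution becomes $\int q(z_j)\big[\alpha_j z_j-\tfrac12 z_j^2+\log p(y_{\bi_j}|z_j)-\log q(z_j)\big]\d z_j$; completing the square turns $\alpha_j z_j-\tfrac12 z_j^2$ into $-\tfrac12(z_j-\alpha_j)^2+\tfrac12\alpha_j^2$, and the bracket is recognised as $\tfrac12\alpha_j^2+\tfrac12\log(2\pi)$ plus a negative KL between $q(z_j)$ and the truncated Gaussian $\propto\N(z_j|\alpha_j,1)\,p(y_{\bi_j}|z_j)$. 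The optimal $q(z_j)$ is that truncated Gaussian and the optimised value is $\tfrac12\alpha_j^2+\tfrac12\log(2\pi)+\log\Phi\big((2y_{\bi_j}-1)\alpha_j\big)$, using $\int\N(z_j|\alpha_j,1)\mathbbm{1}(z_j>0)\d z_j=\Phi(\alpha_j)$ and the mirror identity for $y_{\bi_j}=0$.

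Finally I would assemble the pieces. The crucial cancellation is that $\sum_j\tfrac12\alpha_j^2=\tfrac12\blambda^\top\A_1\blambda$ exactly removes the $\A_1$ half of the conjugate penalty $-\tfrac12\blambda^\top(\K_{BB}+\A_1)\blambda$, leaving precisely $-\tfrac12\blambda^\top\K_{BB}\blambda$, while the stray $\pm\tfrac12\log(2\pi)$ constants from $F_\v$ and from the truncated-Gaussian normalisation annihilate. Adding $\log p(\Ucal)=-\tfrac12\sum_k\|\U^{(k)}\|_F^2$ (up to an additive constant) then reproduces \eqref{eq:t_lb_b}. Since each step either maximises a genuine ELBO or replaces the coupling quadratic by its Fenchel lower bound, $L_2^*(\Ucal,\B,\blambda)\le\log p(\y,\Ucal|\B)$ for every $\blambda$, so $\blambda$ is a free variational parameter that can be tuned to tighten the bound. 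I expect the main obstacle to be bookkeeping these two cancellations and keeping the inequality directions consistent across the conjugate step; the Gaussian and truncated-Gaussian integrals themselves are routine.
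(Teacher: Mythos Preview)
Your proposal is correct and follows essentially the same route as the paper: optimise $q(\v)$ first to obtain the intermediate bound with the coupling quadratic $\tfrac12(\K_{BS}\expt{\z})^\top(\K_{BB}+\A_1)^{-1}(\K_{BS}\expt{\z})$, lower-bound that quadratic via its Fenchel conjugate to decouple the $z_j$'s, then optimise each $q(z_j)$ into the truncated Gaussian and read off $\log\Phi((2y_{\bi_j}-1)\alpha_j)$. Your bookkeeping of the two cancellations ($\sum_j\tfrac12\alpha_j^2=\tfrac12\blambda^\top\A_1\blambda$ killing the $\A_1$ part of the conjugate penalty, and the $\pm\tfrac{N}{2}\log(2\pi)$ terms) is exactly what the paper leaves implicit when it says ``plugging each $q^*(z_j)$ in we finally obtain the tight ELBO,'' so your account is if anything more complete.
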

As we can see, due to the additive forms of the terms in $L_1^*$ and $L_2^*$, such as $\A_1$, $a_2$, $a_3$ and $\a_4$, the computation of the tight ELBOs and their gradients can be efficiently performed in parallel. The derivation of the full gradient is given in Section 2 of the supplementary material.

\subsection{Distributed Inference on Tight Bound}
\subsubsection{Distributed Gradient-based Optimization}
Given the tighter ELBOs in \eqref{eq:t_lb_r} and \eqref{eq:t_lb_b}, we develop a distributed algorithm to optimize the latent factors $\Ucal$, the inducing points $\B$, the variational parameters $\blambda$ (for binary data) and the kernel parameters. We distribute the computations over multiple computational nodes (\map step) and then collect the results to calculate the ELBO and its gradient (\reduce step). A standard routine, such as gradient descent and L-BFGS, is then used to solve the optimization problem.

For binary data, we further find that $\blambda$ can be updated with a simple fixed point iteration:
\begin{align}
\blambda^{(t+1)} = (\K_{BB} + \A_1)^{-1} (\A_1 \blambda^{(t)} + \a_5)\label{eq:fix_point}
\end{align}
where  $\a_5 = \sum_j  k(\B, \x_{\bi_j})(2y_{\bi_j}-1)\frac{\N\big(k(\B, \x_{\bi_j})^\top \blambda^{(t)}|0,1\big)}{\Phi\big((2y_{\bi_j}-1) k(\B, \x_{\bi_j})^\top \blambda^{(t)}\big)}$.

\cmt{
\begin{align}
\blambda^{(t+1)} = \D^{-1} \big(\sum\nolimits_j \k_{Bj}(w_j^{(t)} +  \k_{Bj}^\top \blambda^{(t)})\big)\label{eq:fix_point}
\end{align}
 where $\D \triangleq \K_{BB} + \A_1$, $\k_{Bj} \triangleq k(\B, \x_{\bi_j})$ and
 $w_j^{(t)} = (2y_{\bi_j}-1)\frac{\N(\k_{Bj}^\top \blambda^{(t)}|0,1)}{\Phi((2y_{\bi_j}-1) \k_{Bj}^\top \blambda^{(t)})}$.
}
 Apparently, the updating can be efficiently performed in parallel (due to the additive structure of $\A_1$ and $\a_5$). Moreover, the convergence is guaranteed by the following lemma. The proof is given in Section 3 of the supplementary material.
 \begin{lem}\label{lem1}
 Given $\Ucal$ and $\B$, we have $L_2^*(\Ucal, \B, \blambda^{t+1}) \ge L_2^*(\Ucal, \B, \blambda^{t})$ and the fixed point iteration \eqref{eq:fix_point} always converges.
\end{lem}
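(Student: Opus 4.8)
The plan is to read the fixed point iteration \eqref{eq:fix_point} as a minorize--maximization (MM) scheme acting on the only part of $L_2^*$ that depends on $\blambda$. With $\Ucal$ and $\B$ held fixed, every term in \eqref{eq:t_lb_b} is constant in $\blambda$ except two, so maximizing $L_2^*(\Ucal,\B,\blambda)$ is equivalent to maximizing
\[
g(\blambda) = \sum\nolimits_j \log\Phi\big(s_j\,\k_j^\top\blambda\big) - \frac{1}{2}\blambda^\top\K_{BB}\blambda ,
\]
where I abbreviate $\k_j = k(\B,\x_{\bi_j})$ and $s_j = 2y_{\bi_j}-1\in\{-1,1\}$, and I record the identity $\A_1=\sum_j \k_j\k_j^\top$. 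Since $\Phi$ is log-concave and $\K_{BB}\succeq\0$, the map $g$ is concave (strictly so when $\K_{BB}\succ\0$), so any stationary point is automatically the global maximizer. I would also observe that a fixed point of \eqref{eq:fix_point} satisfies $\K_{BB}\blambda=\a_5$, which is precisely $\nabla g=\0$, thereby identifying fixed points with maximizers of $g$.

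Next I would construct the surrogate. For each $j$ I build a concave quadratic minorant of the scalar map $u\mapsto\log\Phi(s_j u)$, tangent at $u_0=\k_j^\top\blambda^{(t)}$ and with curvature exactly $-1$:
\[
q_j(u) = \log\Phi(s_ju_0) + s_j\frac{\phi(u_0)}{\Phi(s_ju_0)}(u-u_0) - \frac{1}{2}(u-u_0)^2 .
\]
Matching the value and first derivative at $u_0$ is immediate; the point is that $\log\Phi(s_j u)-q_j(u)$ is convex with a stationary point at $u_0$, hence nonnegative, so $q_j$ is a global lower bound. Summing $q_j(\k_j^\top\blambda)$ over $j$ and subtracting $\frac{1}{2}\blambda^\top\K_{BB}\blambda$ produces a concave quadratic surrogate $Q(\blambda\mid\blambda^{(t)})\le g(\blambda)$ with equality at $\blambda^{(t)}$, whose Hessian is $-(\K_{BB}+\A_1)$ and whose linear coefficient is $\A_1\blambda^{(t)}+\a_5$ (using $\sum_j(\k_j^\top\blambda^{(t)})\k_j=\A_1\blambda^{(t)}$ together with the definition of $\a_5$). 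Solving $\nabla_\blambda Q=\0$ then reproduces \eqref{eq:fix_point} exactly, $\blambda^{(t+1)}=(\K_{BB}+\A_1)^{-1}(\A_1\blambda^{(t)}+\a_5)$.

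The monotonicity then follows from the standard MM chain
\[
g(\blambda^{(t+1)}) \ge Q(\blambda^{(t+1)}\mid\blambda^{(t)}) \ge Q(\blambda^{(t)}\mid\blambda^{(t)}) = g(\blambda^{(t)}),
\]
in which the first inequality is the minorant property, the second is the defining maximization of $Q$, and the equality is tangency; adding back the $\blambda$-independent constants yields $L_2^*(\Ucal,\B,\blambda^{(t+1)})\ge L_2^*(\Ucal,\B,\blambda^{(t)})$. For convergence I would note that $\log\Phi\le 0$ forces $g(\blambda)\le-\frac{1}{2}\blambda^\top\K_{BB}\blambda\le 0$, so the nondecreasing sequence $\{g(\blambda^{(t)})\}$ is bounded above and therefore converges. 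When $\K_{BB}\succ\0$ the quadratic term makes $g$ strictly concave and coercive, its superlevel sets are compact and its maximizer unique, and a routine argument shows the iterates remain in a compact set with every limit point a fixed point of \eqref{eq:fix_point}, hence the unique maximizer, giving convergence of $\blambda^{(t)}$ itself.

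The crux --- and the step I expect to demand the most care --- is verifying that curvature $-1$ genuinely yields a global minorant, i.e.\ that $(\log\Phi)''(w)\ge-1$ for all $w\in\mathbb{R}$. Writing $r(w)=\phi(w)/\Phi(w)$ and using $\phi'(w)=-w\phi(w)$ gives $(\log\Phi)''(w)=r'(w)=-w\,r(w)-r(w)^2$, so the claim is equivalent to the classical Mills-ratio bound $w\,r(w)+r(w)^2\le 1$ (equivalently $-1<r'(w)<0$). Establishing this inequality cleanly --- for instance through the differential identity satisfied by $r$, or by invoking the known bound on the derivative of the Gaussian hazard function --- is exactly what licenses the fixed, data-independent curvature matrix $\A_1=\sum_j\k_j\k_j^\top$ in the surrogate, and hence the parallelizable closed-form update; the remainder of the argument is MM bookkeeping.
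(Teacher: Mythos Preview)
Your proof is correct, but it proceeds by a genuinely different route from the paper's. The paper never analyzes $L_2^*$ directly in $\blambda$; instead it returns to the intermediate bound $\tilde{L}_2(\blambda,q(\z))$ from which $L_2^*$ was obtained and observes that $L_2^*(\blambda)=\max_{q(\z)}\tilde{L}_2(\blambda,q(\z))$. The fixed point iteration \eqref{eq:fix_point} is then identified as one full cycle of coordinate ascent on $\tilde{L}_2$: first maximize over $q(\z)$ with $\blambda=\blambda^{(t)}$ fixed (yielding the truncated Gaussians $q_{\blambda^{(t)}}$), then maximize the resulting quadratic in $\blambda$ with $q=q_{\blambda^{(t)}}$ fixed; plugging in the truncated-normal mean $\langle z_j\rangle=\k_j^\top\blambda^{(t)}+s_j\phi(\k_j^\top\blambda^{(t)})/\Phi(s_j\k_j^\top\blambda^{(t)})$ recovers \eqref{eq:fix_point}. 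Monotonicity is then the usual EM sandwich, and convergence of the objective follows because $L_2^*$ is bounded above by the log evidence. Your MM construction and the paper's EM step in fact build the \emph{same} quadratic surrogate (both have Hessian $-(\K_{BB}+\A_1)$ and linear part $\A_1\blambda^{(t)}+\a_5$), so the difference lies in the justification: the paper gets the minorant for free from the variational machinery already in place, whereas you supply it analytically via the curvature bound $(\log\Phi)''\ge -1$. That inequality, incidentally, is exactly the statement that the variance of a one-sided truncated standard normal is at most $1$ (since $1-wr(w)-r(w)^2$ is that variance), which is what ties the two arguments together. Your approach is more self-contained and makes the key analytic fact explicit; the paper's approach is shorter within its own framework because it recycles $\tilde{L}_2$. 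You also go somewhat further than the paper by arguing iterate convergence when $\K_{BB}\succ\0$; the paper is content with convergence of the objective values.
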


In our experience, the fixed-point iterations are much more efficient than general search strategies (such as line-search) to identity an appropriate step length along the gradient direction. To use it,  before we calculate the gradients with respect to $\Ucal$ and $\B$, we first optimize $\blambda$ using the fixed point iteration (in an inner loop). In the outer control, we then employ gradient descent or L-BFGS to optimize $\Ucal$ and $\B$. This will lead to an even tighter bound for our model: $L_2^{**}(\Ucal,\B) = \max\nolimits_{\blambda} L_2^*(\Ucal, \B, \blambda) = \max\nolimits_{q(\v),q(\z)} L_2(\Ucal, \B, q(\v), q(\z))$. Empirically, this converges must faster than feeding the optimization algorithms with $\partial \lambda$, $\partial \Ucal$ and $\partial \B$ altogether.

\subsubsection{Key-Value-Free \mapreduce}
In this section we present the detailed design of \mapreduce procedures to fulfill our distributed inference. Basically, we first allocate a set of tensor entries $S_t$ on each \mapper $t$ such that the corresponding components of the ELBO and the gradients are calculated. Then the \reducer aggregates local results from each \mapper to obtain the integrated, global ELBO and gradient.

We first consider the standard (key-value) design. For brevity, we take the gradient computation for the latent factors as an example. For each tensor entry $\bi$ on a \mapper, we calculate the corresponding gradients $\{ \partial \u^{(1)}_{i_1}, \ldots \partial \u^{(K)}_{i_K} \} $ and then send out the key-value pairs $\{( k, i_k) \rightarrow \partial \u^{(k)}_{i_k}\}_k$, where the key indicates the mode and the index of the latent factors. The \reducer aggregates gradients with the same key to recover the full gradient with respect to each latent factor.

Although the (key-value) \mapreduce has been successfully applied in numerous applications,  it relies on an expensive data shuffling operation: the \reduce step has to sort the \mappers ' output by the keys before aggregation. Since the sorting is usually performed on disk due to significant data size, intensive disk I/Os and network communications will become serious computational overheads.
To overcome this deficiency, we devise a key-value-free \map-\reduce scheme to avoid on-disk data shuffling operations. Specifically, on each \mapper, a complete gradient vector is maintained for all the parameters, including $\Ucal$, $\B$ and the kernel parameters. However, only relevant components of the gradient, as specified by the tensor entries allocated to this \mapper , will be updated. 
After updates, each \mapper will then send out the full gradient vector, and the \reducer will simply sum them up together to obtain a global gradient vector without having to perform any extra data sorting. Note that a similar procedure can also be used to perform the fixed point iteration for $\blambda$ (in binary tensors).

Efficient \mapreduce systems, such as \spark~\citep{zaharia2012resilient}, can fully optimize the non-shuffling \map and \reduce, where most of the data are buffered in memory and disk I/Os are circumvented to the utmost; by contrast, the performance with data shuffling degrades severely~\citep{davidson2013optimizing}. This is verified in our evaluations: on a small tensor of size $100 \times 100 \times 100$, our key-value-free \mapreduce gains $30$ times speed acceleration over the traditional key-value process.  Therefore, our algorithm can fully exploit the memory-cache mechanism to achieve fast inference.
\cmt{
\begin{algorithm}
\caption{Distributed variational inference  ($\y =\{y_{\bi_1}, \ldots, y_{\bi_N}\}$)}
\begin{algorithmic}[1]
\STATE Allocate the tensor entries $\y$ on $T$ \mappers.
\STATE Initialize $\Ucal$, $\B$ and the kernel parameters $\btheta$.
\STATE Optimize $\Ucal$, $\B$ and $\btheta$ by feeding a standard optimization routine, such as gradient descent or L-BFGS, with \textit{DistELBOandGradient($\Ucal$, $\B$, $\btheta$)}.
\STATE Return $\Ucal$, $\B$ and $\btheta$.
\end{algorithmic}
\end{algorithm}
\begin{algorithm}
\caption{DistELBOandGradient($\Ucal$, $\B$, $\btheta$)}
\begin{algorithmic}[1]
	\IF{Binary tensor}
    	\STATE Optimize $\blambda$ using fixed point iteration \eqref{eq:fix_point} with  \map-\reduce.
    \ENDIF
    \FOR {each \mapper\, {\bfseries parallel} }
    	\STATE Calculate the local ELBO and the gradients according to \eqref{eq:t_lb_r} or \eqref{eq:t_lb_b}.
    	\STATE Send out a whole gradient vector and the local ELBO.
    \ENDFOR
    \STATE \reduce task: Sum the output from all the \mappers.
    \STATE Return the global ELBO and the full gradients with respect to $\Ucal$, $\B$ and  $\btheta$.
\end{algorithmic}\label{alg:distGD}
\end{algorithm}
}
\subsection{Algorithm Complexity}
Suppose we use $N$ tensor entries for training, with $p$ inducing points and $T$ {\mapper}, the time complexity for each \mapper node is $O(\frac{1}{T}p^2N)$.
Since $p \ll N$ is a fixed constant ($p=100$ in our experiments), the time complexity is linear in the number of tensor entries. The space complexity for each \mapper node is $O(\sum_{j=1}^K m_j r_j + p^2 + \frac{N}{T}K)$, in order to store the latent factors, their gradients, the covariance matrix on inducing points,  and the indices of the latent factors for each tensor entry. Again, the space complexity is linear in the number of tensor entries.
In comparison, \InfTucker utilizes the Kronecker-product properties to calculate the gradients and has to perform eigenvalue decomposition of the covariance matrices in each tensor mode. Therefor it has a higher time and space complexity (see ~\citep{XuYQ12} for details) and is not scalable to large dimensions.


\section{Related work}
Classical tensor factorization models include Tucker~\citep{Tucker66} and CP~\citep{Harshman70parafac}, based on which many excellent works have been proposed \citep{ShashuaH05,Chu09ptucker,sutskever2009modelling,acar2011scalable,hoff_2011_csda,YangDunson13Tensor,RaiDunson2014,sun2015provable,hu2015zero,raiscalable}. To deal with big data, several distributed factorization algorithms have been recently developed, such as GigaTensor~\citep{kang2012gigatensor} and DFacTo~\citep{choi2014dfacto}.
Despite the widespread success of these methods, their underlying multilinear factorization structure may limit their capability to capture more complex, nonlinear relationship in real-world applications. Infinite Tucker decomposition~\citep{XuYQ12}, and its distributed or online extensions~\citep{zhe2013dintucker,zhe2015scalable} address this issue by modeling tensors or subtensors via a tensor-variate Gaussian process (TGP). However, these methods may suffer from the extreme sparsity in real-world tensor data, because the Kronecker-product structure in the covariance of TGP requires modeling the entire tensor space no matter the elements are meaningful (non-zeros) or not. By contrast, our flexible GP factorization model eliminates the Kronecker-product restriction and can model an arbitrary subset of tensor entries. In theory, all such nonlinear factorization models belong to the random function prior models~\citep{LloydOGR12randomgraph} for exchangeable multidimensional arrays.

Our distributed variational inference algorithm is based on sparse GP~\citep{quinonero2005unifying}, an efficient approximation framework to scale up GP models. Sparse GP uses a small set of inducing points to break the dependency between random function values. 
Recently, \citet{titsias2009variational} proposed a variational learning framework for sparse GP, based on which \citet{gal2014distributed} derived a tight variational lower bound for distributed inference of GP regression and GPLVM~\citep{lawrence2004gaussian}. The derivation of the tight ELBO in our model for continuous tensors is similar to \citep{gal2014distributed}. However, the gradient calculation is substantially different, because the input to our GP factorization model is the concatenation of the latent factors. Many tensor entries may partly share the same latent factors, causing a large amount of key-value pair to be sent during the distributed gradient calculation. This will incur an expensive data shuffling procedure that takes place on disk. To improve the computational efficiency, we develop a non-key-value \map-\reduce to avoid data shuffling and fully exploit the memory-cache mechanism in efficient \mapreduce systems. This strategy is also applicable to  other \map-\reduce based learning algorithms. In addition to continuous data, we also develop a tight ELBO for binary data on optimal variational posteriors. By introducing $p$ extra variational parameters with convex conjugates ($p$ is the number of inducing points), our inference can be performed efficiently in a distributed manner, which avoids explicit  optimization on a large number of variational posteriors for the latent tensor entries and inducing targets. Our method can also be useful for GP classification problem.

\cmt{
Recently, several other distributed tensor decomposition algorithms have also been proposed, including GigaTensor~\citep{kang2012gigatensor} and DFacTo~\citep{choi2014dfacto}. These works are based on CP factorization, and focus on accelerating the alternative least square algorithm (ALS)  (though DFacTo is also suitable for gradient descent algorithm for CP). Though efficient, it remains an open problem to adopt these methods in modelling nonlinear tensor structures. 
}

\cmt{
 Both \InfTucker and \ours are nonparametic models based on the Gaussian process.\cmt{While \InfTucker assumes that the whole tensor is the projection of the tensor-variate Gaussian process over the latent factor matrices $\Ucal=\{\U^{(1},\ldots, \U^{(K)}\}$, \ours assumes that tensor entries come from a series of sub-tensors, each of which is generated by the projection of the tensor-variate Gaussian process over the corresponding latent factors.} Another nonparametric tensor decomposition model is the Random Function Prior model (RFP) prosed by \citet{zoubin}. Based on the De Finetti-type
 presentations for random arrays, the RFP model assumes that any subset of tensor entries are projection of a Gaussian process.
 }
 
\section{Experiments}
\subsection{Evaluation on Small Tensor Data}
For evaluation, we first compared our method with various existing tensor factorization methods. To this end, we used four small real datasets where all methods are computationally feasible:  (1) \textit{Alog}, a real-valued tensor of size $200\times 100 \times 200$, representing a three-way interaction (user, action, resource) in a file access log. It contains $0.33\%$ nonzero entries.(2) \textit{AdClick}, a real-valued tensor of size $80 \times 100 \times 100$, describing (user, publisher, advertisement) clicks for online advertising. It contains $2.39\%$ nonzero entries. (3) \textit{Enron}, a binary tensor extracted from the Enron email dataset (\url{www.cs.cmu.edu/~./enron/})  depicting the three-way relationship (sender, receiver, time). It contains  $203\times 203 \times 200$ elements, of which $0.01\%$ are nonzero.
(4) \textit{NellSmall}, a binary tensor extracted from the NELL knowledge base (\url{rtw.ml.cmu.edu/rtw/resources}), of size $295 \times 170 \times  94$. It depicts the knowledge predicates (entity, relationship, entity). The data set contains  $0.05\%$ nonzero elements.


 We compared with CP, nonnegative CP (NN-CP)~\citep{ShashuaH05}, high order SVD (HOSVD)~\citep{Lathauwer00HOSVD}, Tucker, infinite Tucker (\InfTucker) \citet{XuYQ12} and its extension (\InfTuckerEx) which uses the Dirichlet process mixture (DPM) prior to model latent clusters and local TGP to perform scalable, online factorization~\citep{zhe2015scalable}. Note that \InfTucker and \InfTuckerEx are nonlinear factorization approaches.

For testing, we used the same setting as in~\citep{zhe2015scalable}. All the methods were evaluated via a 5-fold cross validation. The nonzero entries were randomly split into $5$ folds: $4$ folds were used for training and the remaining non-zero entries and $0.1\%$ zero entries were used for testing so that the number of non-zero entries is comparable to the number of zero entries. By doing this, zero and nonzero entries are treated equally important in testing, and so the evaluation will not be dominated by large portion of zeros. For \InfTucker and \InfTuckerEx, we carried out extra cross-validations to select the kernel form (\eg RBF, ARD and Matern kernels) and the kernel parameters. For \InfTuckerEx, we randomly sampled subtensors and tuned the learning rate following ~\citep{zhe2015scalable}.
 For our model, the number of inducing points was set to $100$, and we used a balanced training set generated as follows: in addition to nonzero entries, we randomly sampled the same number of zero entries and made sure that they would not overlap with the testing zero elements.

Our model used ARD kernel and the kernel parameters were estimated jointly with the latent factors. Thus, the expensive parameter selection procedure was not needed.
We implemented our distributed inference algorithm with two optimization frameworks, gradient descent and L-BFGS (denoted by Ours-GD and Ours-LBFGS respectively).
For a comprehensive evaluation, we also examined CP on balanced training entries generated in the same way as our model,  denoted by CP-2.
The mean squared error (MSE) is used to evaluate predictive performance on \textit{Alog} and \textit{Click} and area-under-curve (AUC) on \textit{Enron} and \textit{Nell}. The averaged results from the $5$-fold cross validation are reported.

 Our model achieves a higher prediction accuracy than \InfTucker, and a better or comparable accuracy than \InfTuckerEx (see Figure \ref{fig:small-pred}). A $t$-test shows that our model  outperforms \InfTucker significantly ($p<0.05$) in almost all situations.
Although \InfTuckerEx uses the DPM prior to improve factorization, our model still obtains significantly better predictions on \textit{Alog} and \textit{AdClick} and comparable or better performance on \textit{Enron} and \textit{NellSmall}. This might be attributed to the flexibility of our model in  using balanced training entries to prevent the learning bias (toward numerous zeros). Similar improvements can be observed from CP to CP-2. 
Finally, our model outperforms all the remaining methods, demonstrating the advantage of our nonlinear factorization approach.
\subsection{Scalability Analysis}
To examine the scalability of the proposed distributed inference algorithm, we used the following large real-world datasets: (1) ACC, A real-valued tensor describing three-way interactions (user, action, resource) in a code repository management system \citep{zhe2015scalable}. The tensor is of size $3K \times 150 \times 30K$, where  $0.009\%$ are nonzero. (2) DBLP: a binary tensor depicting a three-way bibliography relationship (author, conference, keyword) \citep{zhe2015scalable}. The tensor was extracted from DBLP database\cmt{\footnote{\url{http://dblp.uni-trier.de/xml/}}} and contains $10K \times 200 \times 10K$ elements, where $0.001\%$ are nonzero entries.
(3) NELL: a binary tensor representing the knowledge predicates, in the form of (entity, entity, relationship) \citep{zhe2013dintucker}. 
The tensor size is $20K \times 12.3K \times 280$ and $0.0001\%$ are nonzero.

The scalability of our distributed inference algorithm was examined with regard to the number of machines on ACC dataset. The number of latent factors was set to 3. 
We ran our algorithm using the gradient descent. The results are shown in Figure \ref{fig:large}(a). The Y-axis shows the reciprocal of the running time multiplied by a constant---which corresponds to the running speed. As we can see, the speed of our algorithm scales up linearly to the number of machines.
\begin{figure*}
\centering
\begin{tabular}[c]{cccc}
\multicolumn{4}{c}{\includegraphics[scale=0.25]{./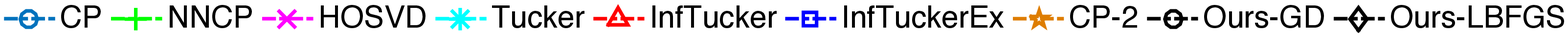} }
\vspace{-0.1in}
\\
\subfigure[\textit{Alog}]{
\includegraphics[width=0.2\textwidth]{./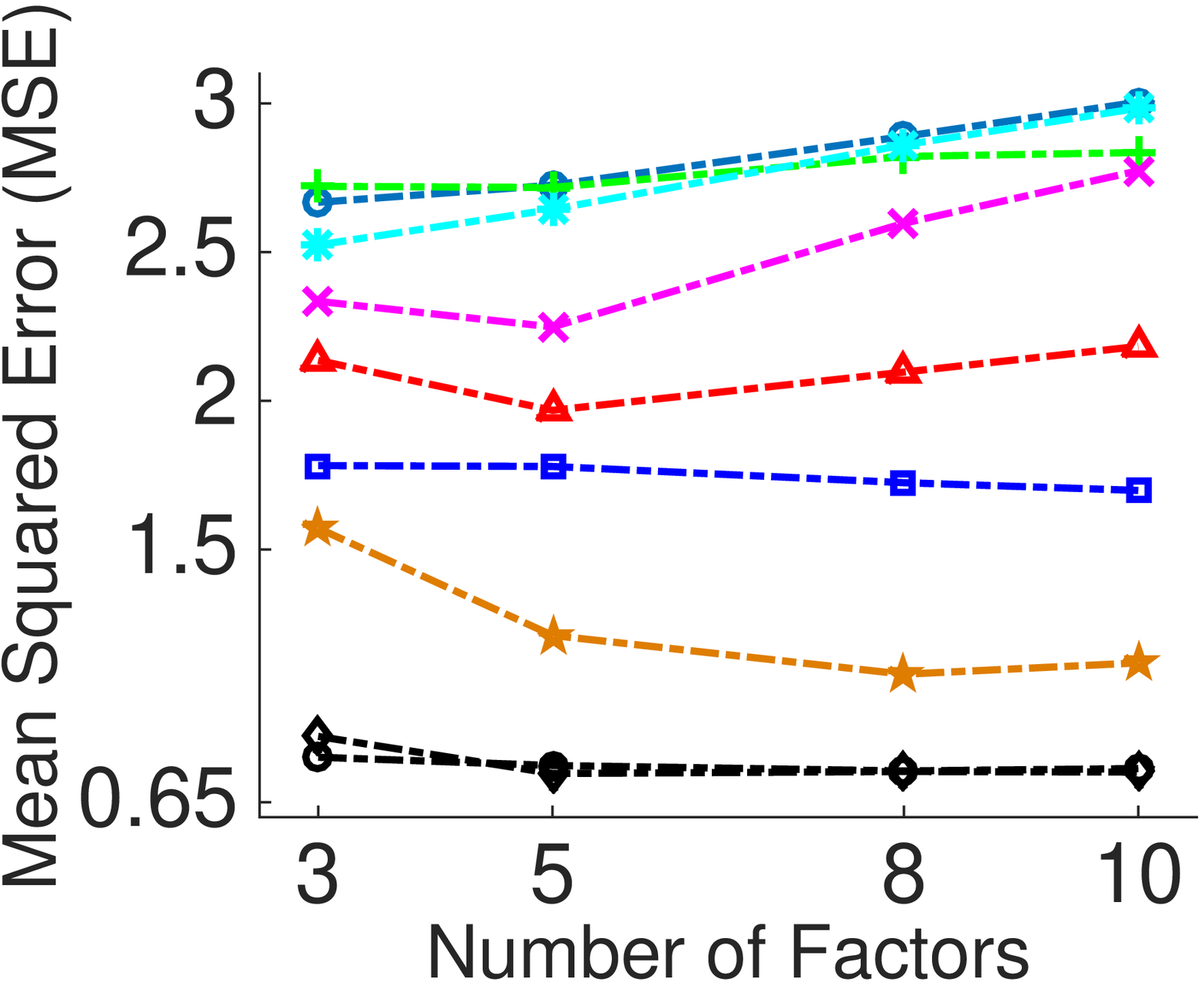}
}
\setcounter{subfigure}{1}
 &
\subfigure[\textit{AdClick}]{
\includegraphics[width=0.2\textwidth]{./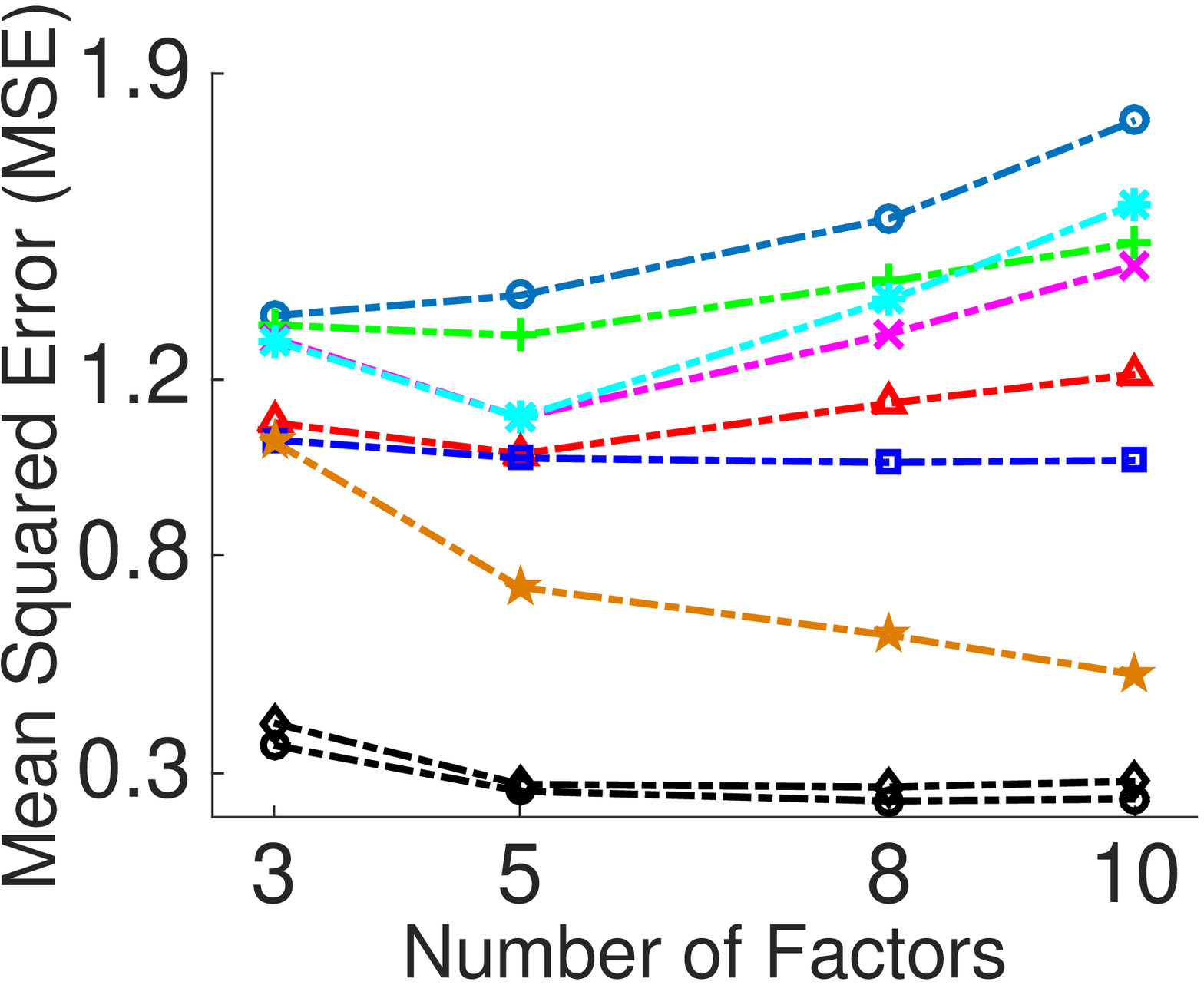}
} &
\subfigure[\textit{Enron}]{
\includegraphics[width=0.2\textwidth]{./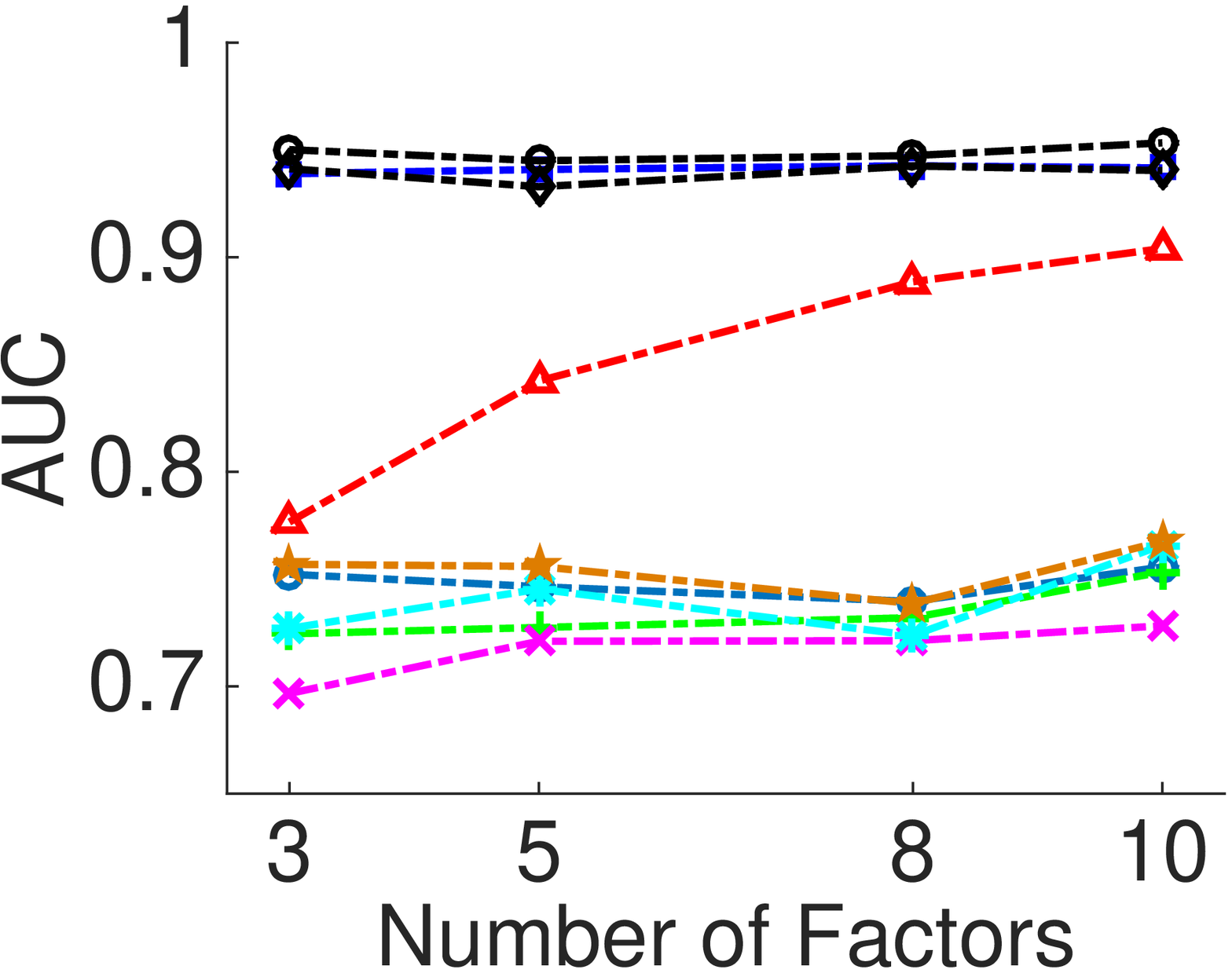}
} &
\subfigure[\textit{NellSmall}]{
\includegraphics[width=0.2\textwidth]{./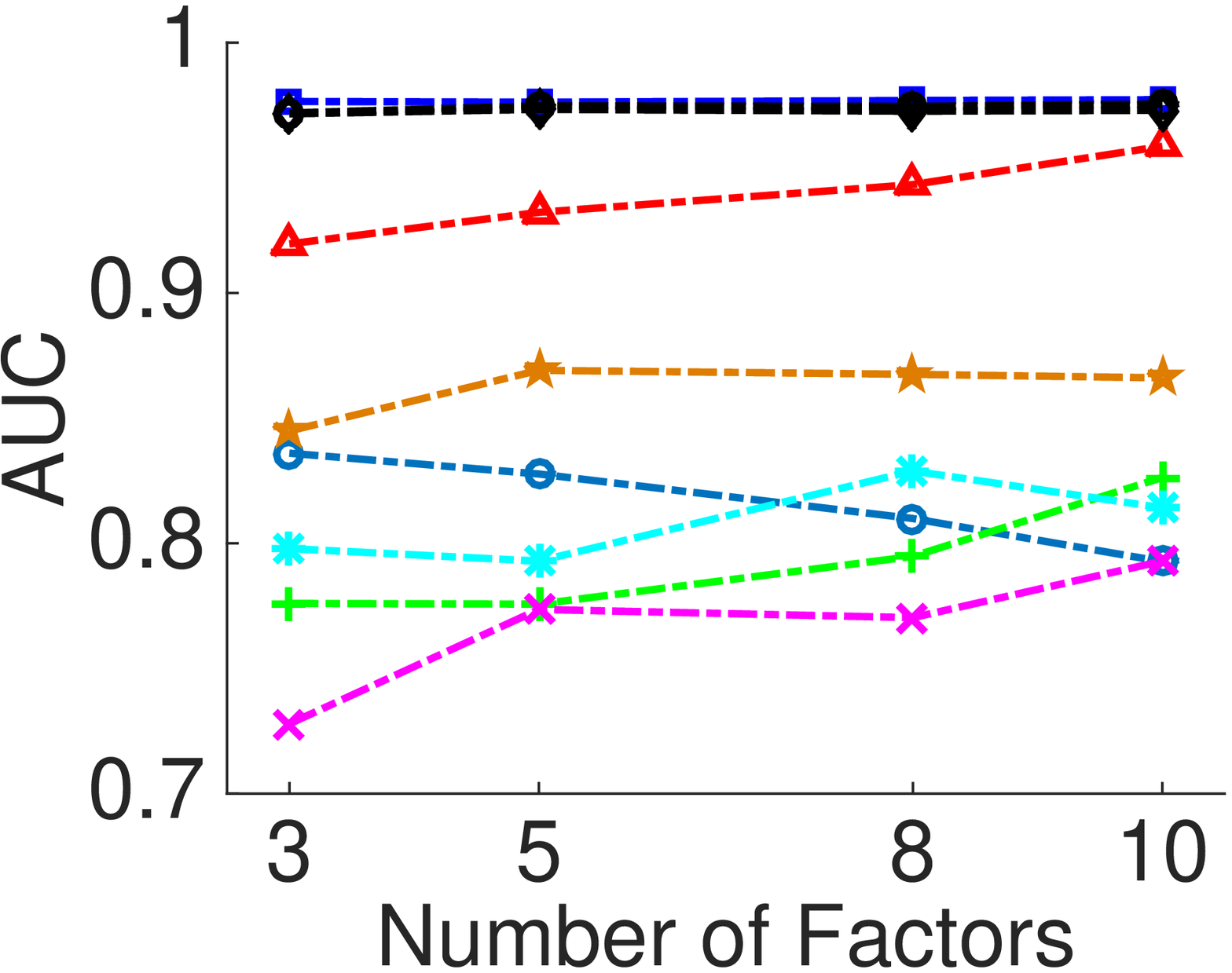}
}
\end{tabular}
\caption{The prediction results on small datasets. The results are averaged over 5 runs.}
\label{fig:small-pred}
\end{figure*}
\begin{figure*}
\centering
\begin{tabular}[c]{cccc}
\setcounter{subfigure}{0}
\subfigure[{Scalability}]{
	\includegraphics[width=0.2\textwidth]{./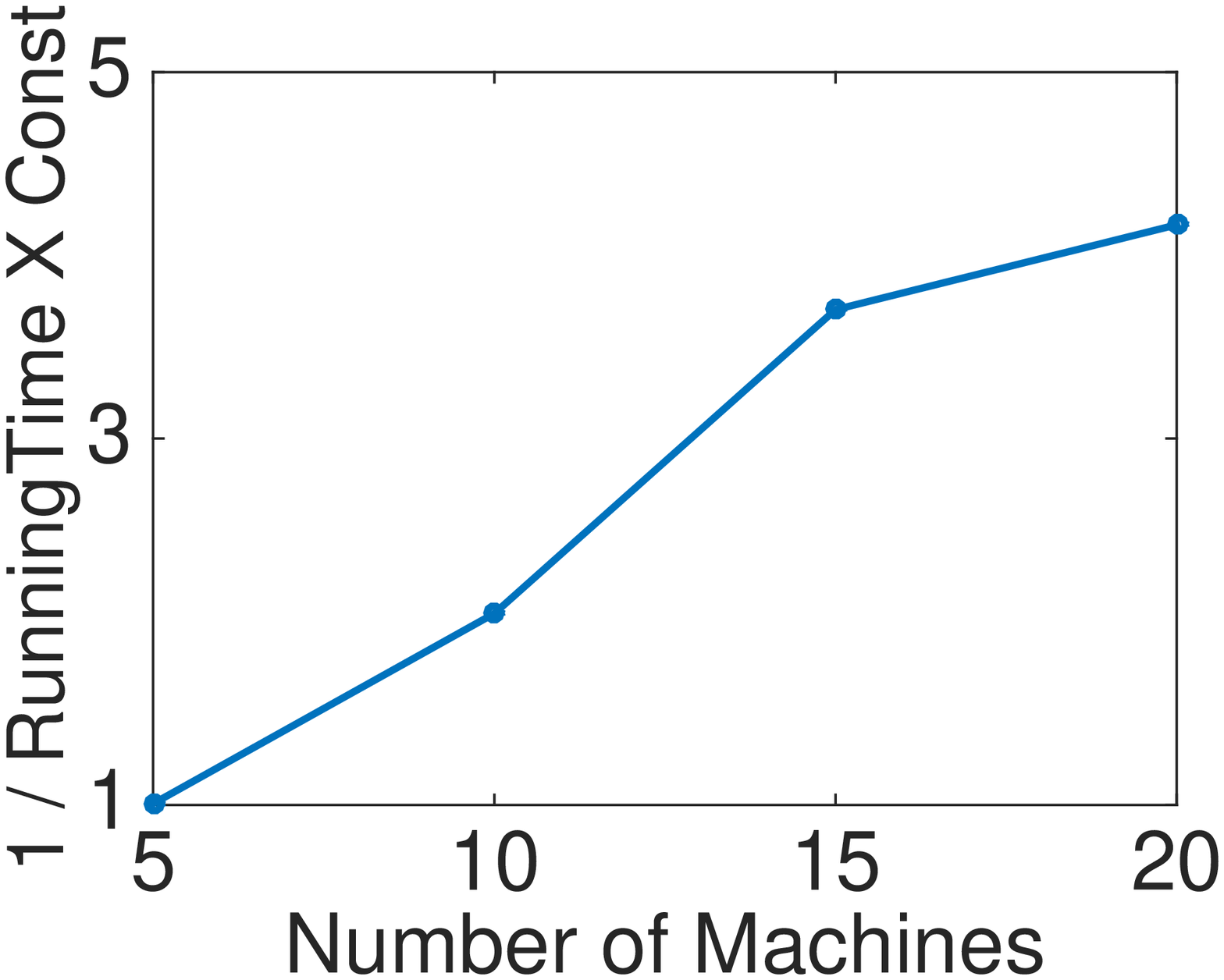}
}
&
\subfigure[{ACC}]{
\includegraphics[width=0.2\textwidth]{./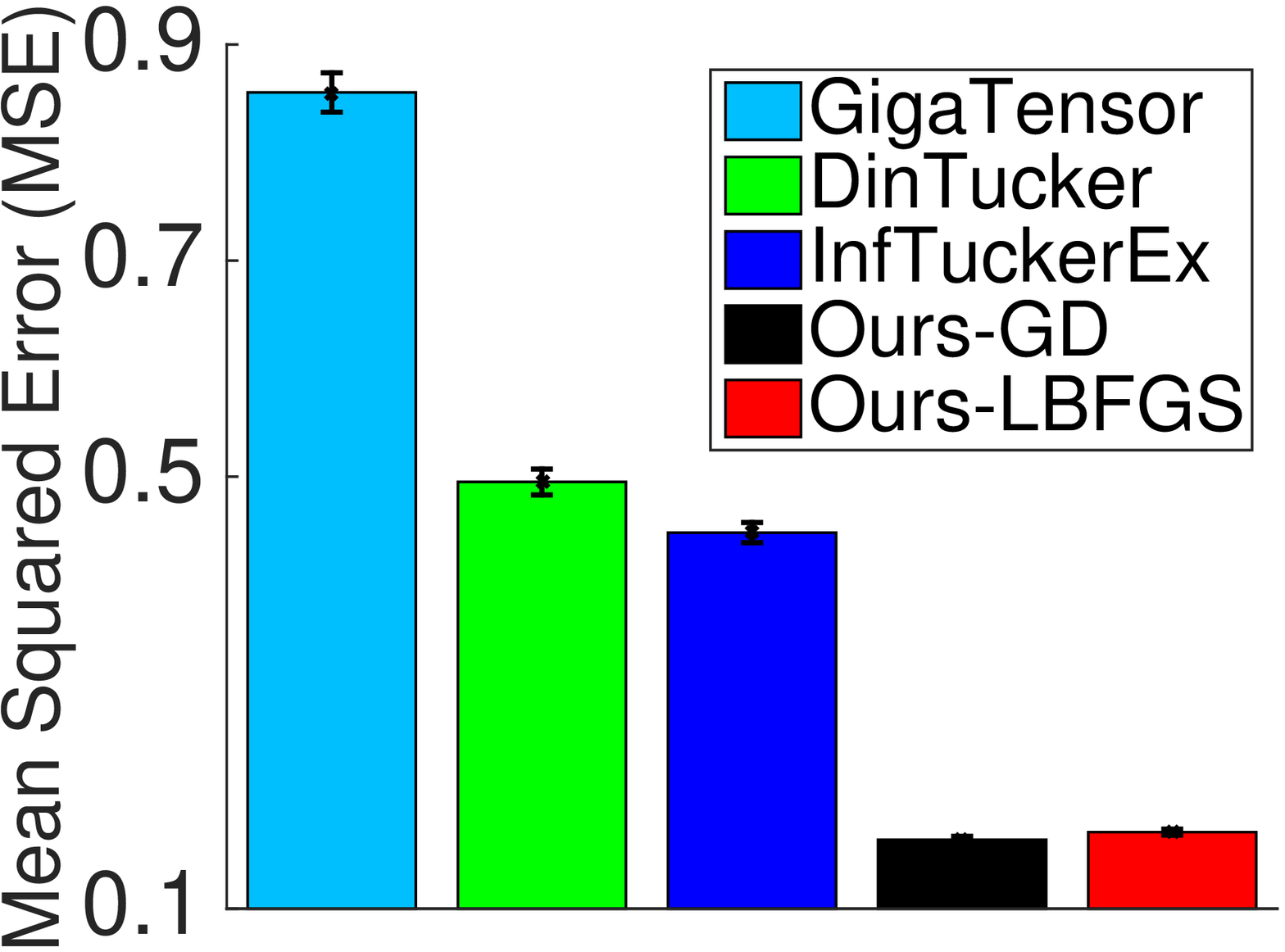}
}
&
\subfigure[{DBLP}]{
\includegraphics[width=0.2\textwidth]{./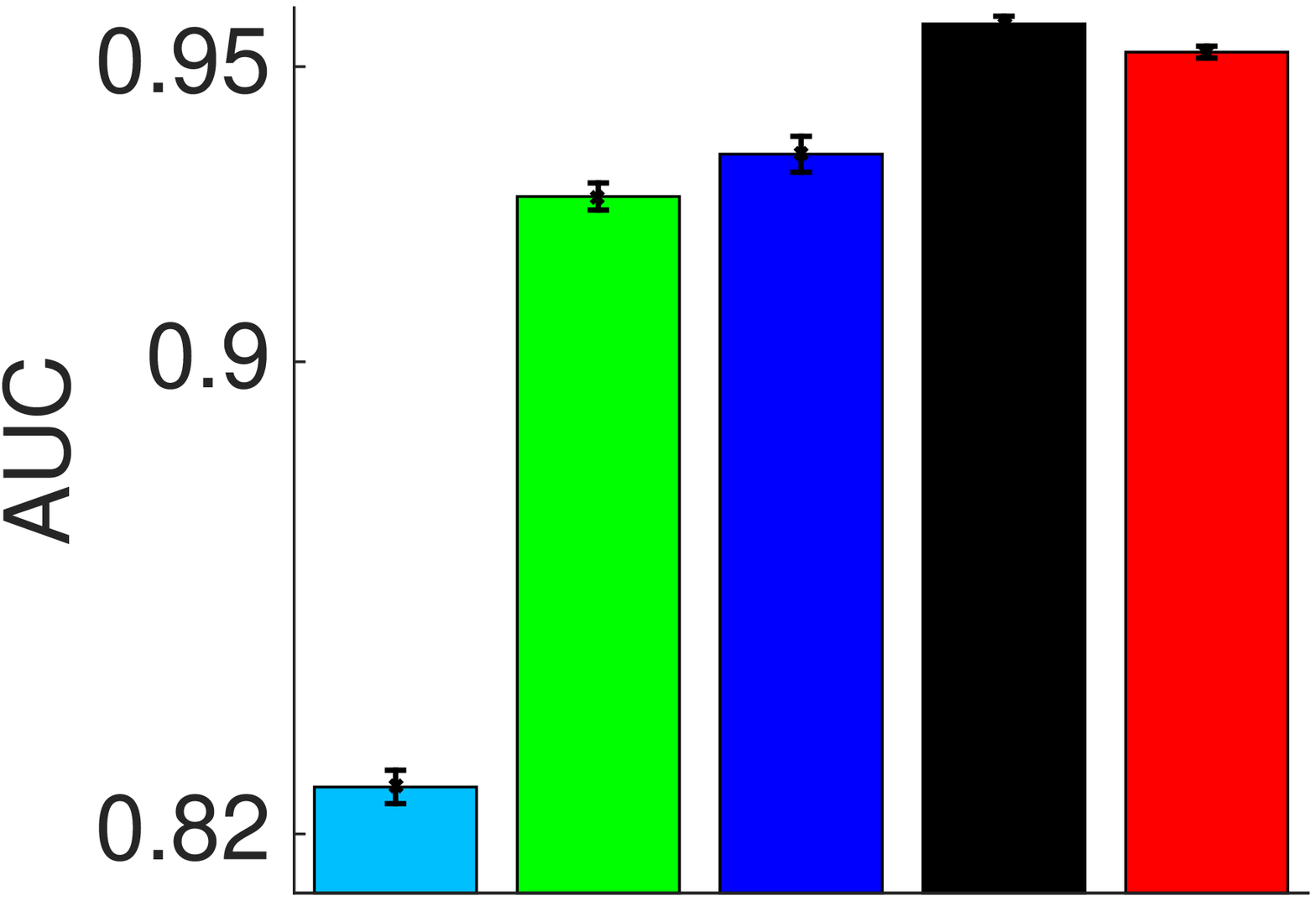}
} &
\subfigure[{NELL}]{
\includegraphics[width=0.2\textwidth]{./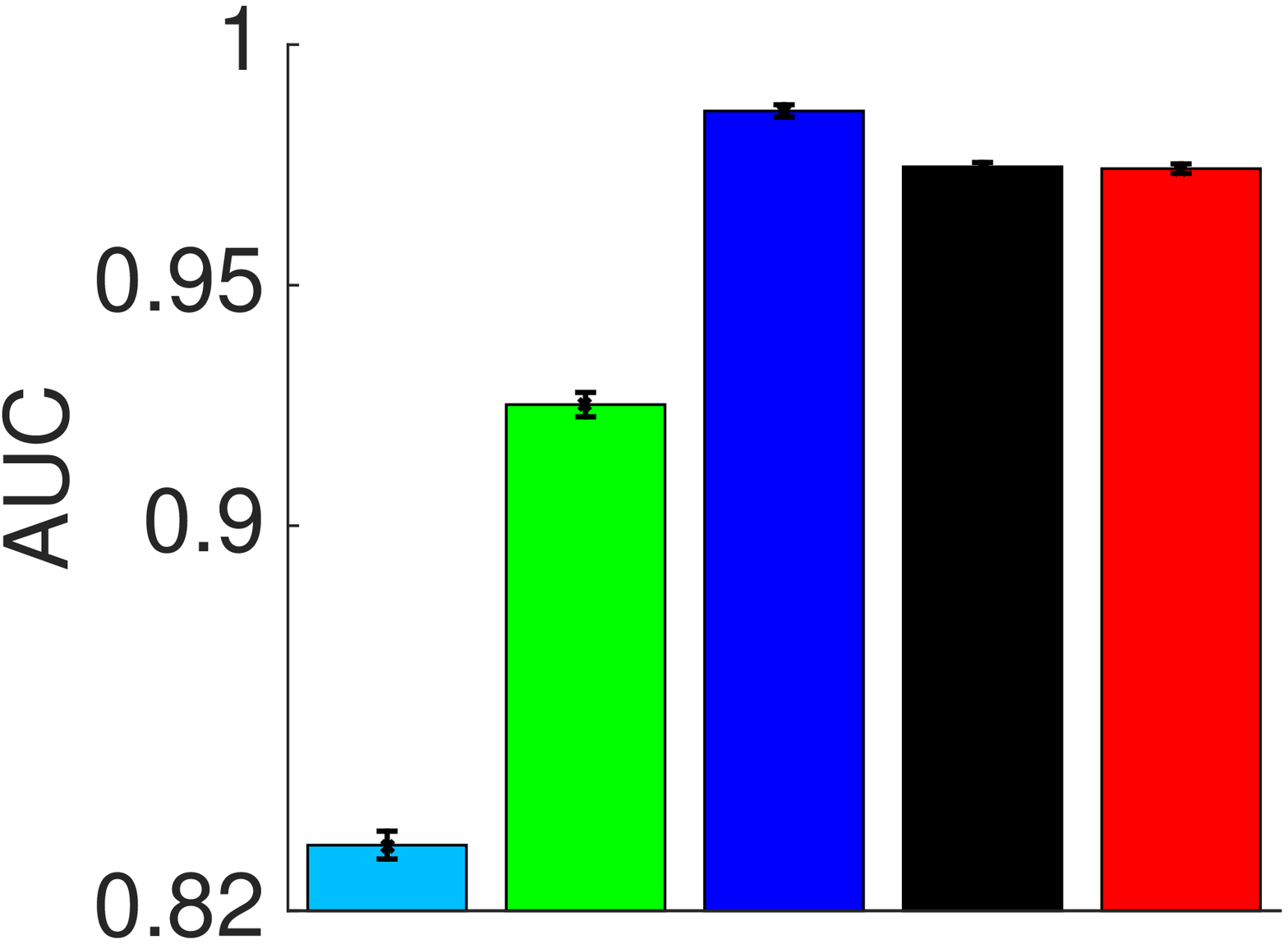}
}
\end{tabular}
\caption{Prediction accuracy (averaged on $50$ test datasets) on large tensor data and the scalability. }
\label{fig:large}
\end{figure*}
\subsection{Evaluation on Large Tensor Data}
We then compared our approach with three state-of-the-art large-scale tensor factorization methods: GigaTensor~\citep{kang2012gigatensor}, Distributed infinite Tucker decomposition (DinTucker)~\citep{zhe2013dintucker}, and \InfTuckerEx~\citep{zhe2015scalable}. Both GigaTensor and DinTucker are developed on \hadoop, while \InfTuckerEx uses online inference. Our model was implemented on \spark. We ran Gigatensor, DinTucker and our approach on a large YARN cluster and \InfTuckerEx on a single computer. 

We set the number of latent factors to $3$ for ACC and DBLP data set, and $5$ for NELL data set. Following the settings in \citep{zhe2015scalable,zhe2013dintucker}, we randomly chose $80\%$ of nonzero entries for training, and then sampled $50$ test data sets from the remaining entries. For ACC and DBLP, each test data set comprises $200$ nonzero elements and $1,800$ zero elements; for NELL, each test data set contains $200$ nonzero elements and $2,000$ zero elements. The running of GigaTensor was based on the default settings of the software package.
For DinTucker and \InfTuckerEx, we randomly sampled subtensors for distributed or online inference. The parameters, including the number and size of the subtensors and the learning rate, were selected in the same way as~\citep{zhe2015scalable}. The kernel form and parameters were chosen by a cross-validation on the training tensor.
For our model, we used the same setting as in the small data.
We set $50$ \mappers for GigaTensor, DinTucker and our model.

Figure \ref{fig:large}(b)-(d) shows the predictive performance of all the methods. We observe that our approach  consistently
outperforms GigaTensor and DinTucker on all the three datasets; our approach outperforms \InfTuckerEx on ACC and DBLP and is slightly worse than \InfTuckerEx on NELL. Note again that \InfTuckerEx uses DPM prior to enhance the factorization while our model doesn't; finally, all the nonlinear factorization methods outperform GigaTensor, a distributed CP factorization algorithm by a large margin, confirming the advantages of nonlinear factorizations on large data.  In terms of speed, our algorithm is much faster than GigaTensor and DinTucker. For example, on DBLP dataset, the average per-iteration running time were 1.45, 15.4 and 20.5 minutes for our model, GigaTensor and DinTucker, respectively. This is not surprising, because (1) our model uses the data sparsity and can exclude numerous, meaningless zero elements from training; (2) our algorithm is based on \spark, a more efficient \mapreduce system than \hadoop; (3) our algorithm gets rid of data shuffling and can fully exploit the memory-cache mechanism of \spark.

\subsection{Application on Click-Through-Rate Prediction}
In this section, we report the results of applying our nonlinear tensor factorization approach on Click-Through-Rate (CTR) prediction for online advertising. 

We used the online ads click log from a major Internet company, from which we extracted a four mode tensor (\textit{user}, \textit{advertisement}, \textit{publisher}, \textit{page-section}). We used the first three days's log on May 2015,  trained our model on one day's data and used it to predict the click behaviour on the next day.   
The sizes of the extracted tensors for the three days are $179K \times 81K \times 35 \times 355$, $167K \times 78K \times 35 \times 354$ and $213K \times 82K \times 37 \times 354$ respectively. These tensors are very sparse ($2.7\times 10^{-8}\%$ nonzeros on average). In other words, the observed clicks are very rare. However, we do not want our prediction completely bias toward zero (\ie non-click); otherwise, ads ranking and recommendation will be infeasible. Thus we sampled non-clicks of the same quantity as the clicks for training and testing. Note that training CTR prediction models with comparable clicks and non-click samples is common in online advertising systems~\citep{agarwal2014laser}. The number of training and testing entries used for the three days are $(109K,99K)$, $(91K,103K)$ and $(109K, 136K)$ respectively. 

We compared with popular methods for CTR prediction, including logistic regression and linear SVM, where each tensor entry is represented by a set of binary features according to the indices of each mode in the entry. 

The results are reported in Table \ref{tb:ctr-pred}, in terms of AUC. It shows that our model improves logistic regression and linear SVM by a large margin, on average $20.7\%$ and $20.8\%$ respectively.  
Therefore, although we have not incorporated side features, such as user profiles and advertisement attributes, our tentative experiments have shown a promising potential of our model on CTR prediction task.
\begin{table}[htbp]
	\caption{CTR prediction accuracy on the first three days of May 2015. "1-2" means using May 1st's data for training and May 2nd's data for testing; similar are "2-3" and "3-4".} 
	\small
	\centering  
	\begin{tabular}{lccc}
		\hline \hline
		
		Method & 1-2 & 2-3 & 3-4 \\
		\hline
		Logistic regression & 0.7360 & 0.7337 & 0.7538 \\
		Linear SVM & 0.7414 & 0.7332 & 0.7540 \\
		Our model & \textbf{0.8925} & \textbf{0.8903} & \textbf{0.9054}\\
		\hline \hline
	\end{tabular}\label{tb:ctr-pred}
\end{table}

\section{Conclusion}
In this paper, we have proposed a new nonlinear and flexible tensor factorization model. By disposing of the Kronecker-product covariance structure, the model can properly exploit the data sparsity and is flexible to incorporate any subset of meaningful tensor entries for training. Moreover, we have derived a tight ELBO for both continuous and binary problems, based on which we further developed an efficient distributed variational inference algorithm in \mapreduce framework. In the future, we will consider applying asynchronous inference on the tight ELBO, such as \citep{smyth2009asynchronous},  to further improve the scalability of our model.

\bibliographystyle{newapa}
\bibliography{DFTF}

\section*{Supplementary Material}
In this extra material, we provide the details about the derivation of the tight variational evidence lower bound of our proposed GP factorization model (Section \ref{sect:tvelbo}) as well as its gradient calculation (Section \ref{sect:grad}). Moreover, we give the convergence proof of the fixed point iteration used in our distributed inference algorithm for binary tensor (Section \ref{sect:fixed}). 
\setcounter{section}{0}
\section{Tight Variational Evidence Lower Bound}\label{sect:tvelbo}
The naive variational evidence lower bound (ELBO) derived from the sparse Gaussian process framework (see Section 4.1 of the main paper) is given by 
\begin{align}
L_1(\Ucal, \B, q(\v)) &= \log(p(\Ucal)) + \int q(\v) \log\frac{ p(\v|\B) }{ q(\v)} \d \v \nonumber \\
&+  \sum\nolimits_j \int q(\v) F_\v(y_{\bi_j},\beta) \d \v \label{eq:elbo_r-c}
\end{align}
for continuous tensor  
and 
\begin{align}
& L_2(\Ucal, \B, q(\v),q(\z)) = \log(p(\Ucal)) + \int q(\v) \log( \frac{p(\v|\B)}{q(\v)} )\d \v  \nonumber\\
&+ \sum\nolimits_j q(z_j)\log(\frac{p(y_{\bi_j}|z_j)}{q(z_j)}) + \sum\nolimits_j  \int q(\v) \int q(z_j)F_\v(z_j, 1) \d z_j \d \v  \label{eq:elbo_b-c} 
\end{align}
for binary tensor, where $F_\v(\cdot_j, *) =  \int \log\big(\N(\cdot_j | m_{\bi_j}, *) \big)\N(m_{\bi_j}|\mu_j, \sigma_j^2)\d m_{\bi_j}$ and $p(\v|\B) = \N(\v|\0, K_{BB})$.
Our goal is to further obtain a tight ELBO that subsumes the optimal variational posterior (\ie $q(\v)$ and $q(\z)$) so as to prevent the sequential E-M procedure for efficient parallel training and to improve the inference quality. 
\subsection{Continuous Tensor}
First, let us consider the continuous data. Given $\Ucal$ and $\B$, we use functional derivatives~\citep{bishop2006pattern} to calculate the optimal $q(\v)$. The functional derivative of $L_1$ with respect to $q(\v)$ is given by
\begin{align}
\frac{\delta L_1(q)}{\delta q(\v)} = \log\frac{p(\v|\B)}{q(\v)} - 1 + \sum\nolimits_j F_\v(y_{\bi_j},\beta). \nonumber
\end{align}
Because $q(\v)$ is a probability density function, we use Lagrange multipliers to impose the constraint and obtain the optimal $q(\v)$ by solving 
\begin{align}
\frac{\delta \big(L_1(q)+  \lambda (\int q(\v) \d \v - 1)\big)}{\delta q(\v)} &= 0, \nonumber \\
\frac{\partial \big(L_1(q)+  \lambda (\int q(\v) \d \v - 1)\big)}{\partial \lambda} &= 0. \nonumber
\end{align}
Though simple algebraic manipulations, we can obtain the optimal $q(\v)$ to be the following form
\[ 
q^*(\v) = \N(\v|\bmu,\bLambda), 
\]
where
\[
\bmu = \beta \K_{BB}(\K_{BB} + \beta\K_{BS}\K_{SB})^{-1}\K_{BS}\y, \;\;\;\;\; \bLambda=\K_{BB}(\K_{BB} + \beta\K_{BS}\K_{SB})^{-1}\K_{BB}.
\] 
 Now substituting $q(\v)$ in $L_1$ with $\N(\v|\bmu,\bLambda)$, we obtain the tight ELBO presented in \textbf{Theorem} 4.1 of the main paper:
\begin{align}
&\log\big(p(\y,\Ucal|\B)\big) \ge L_1^*(\Ucal, \B) =  \frac{1}{2}\log |\K_{BB}| - \frac{1}{2}\log|\K_{BB} + \beta\A_1|-\frac{1}{2}\beta a_2- \frac{1}{2}\beta a_3 \nonumber \\
& + \frac{\beta}{2}\tr(\K_{BB}^{-1}\A_1)-\frac{1}{2}\sum_{k=1}^K \| \U^{(k)} \|^2_F + \frac{1}{2}\beta^2 \a_4^\top(\K_{BB}+\beta\A_1)^{-1}\a_4 \nonumber \\
&+ \frac{N}{2}\log(\frac{\beta}{2\pi}), \label{eq:t_lb_r-c}
\end{align}
where $\|\cdot \|_F$ is Frobenius norm, and
\begin{align}
\A_1 &= \sum\nolimits_j k(\B, \x_{\bi_j}) k(\x_{\bi_j}, \B), \;\;\;\;\;\;\;\;\; a_2 = \sum\nolimits_j y_{\bi_j}^2, \nonumber \\
a_3 &= \sum\nolimits_j k(\x_{\bi_j}, \x_{\bi_j}),  \;\;\;\;\;\;\;\;\; \a_4 = \sum\nolimits_j k(\B, \x_{\bi_j}) y_{\bi_j} \nonumber.
\end{align}

\subsection{Binary Tensor}
Next, let us look at the binary data.  The case for binary tensors is more complex, because we have the additional variational posterior $q(\z)=\prod_j q(z_j)$. Furthermore, $q(\v)$ and $q(\z)$ are coupled in the original ELBO (see \eqref{eq:elbo_b-c}). To eliminate $q(\v)$ and $q(\z)$, we use the following steps. We first fix $q(\z)$, calculate the optimal $q(\v)$ and plug it into $L_2$ (this is similar to the continuous case) to obtain an intermediate bound, 
\begin{align}
&\hat{L}_2(q(\z), \Ucal, \B) = \max\limits_{q(\v)} L_2(q(\v), q(\z), \Ucal, \B) \nonumber \\
&=  \frac{1}{2}\log |\K_{BB}| - \frac{1}{2}\log|\K_{BB} + \A_1| - \frac{1}{2} \sum\nolimits_j \expt{z_j^2} - \frac{1}{2} a_3 + \frac{1}{2}\tr(\K_{BB}^{-1}\A_1)  \nonumber \\
&  - \frac{N}{2}\log(2\pi) + \frac{1}{2}(\K_{BS}\expt{\z})^\top(\K_{BB}+\A_1)^{-1})(\K_{BS}\expt{\z})  \nonumber \\
&   + \sum\nolimits_j \int q(z_j)\log(\frac{p(y_{\bi_j}|z_j)}{q(z_j)})\d z_j -\frac{1}{2}\sum\nolimits_{k=1}^K \| \U^{(k)} \|^2_F  \label{eq:t_lb_b_m}
\end{align}
where $\expt{\cdot}$ denotes the expectation under the variational posteriors. Note that $\hat{L}_2$ has a similar form to $L_1^*$ in \eqref{eq:t_lb_r-c}. 

Now we consider to calculate the optimal $q(\z)$ for $\hat{L}_2$. To this end, we calculate the functional derivative of $\hat{L}_2$ with respect to each $q(z_j)$:
\begin{align}
\frac{\delta \hat{L}_2}{\delta q(z_j)} = \log \frac{p(y_{\bi_j}|z_j)}{q(z_j)} - 1 - \frac{1}{2}z_j^2 + c_{jj}\langle z_j \rangle z_j + \sum_{t\neq j} c_{tj}\langle z_t \rangle z_j. \nonumber
\end{align}
where $c_{tj} = k(\x_{\bi_t}, \B)(\K_{BB} + \A_1)^{-1}k(\B, \x_{\bi_j})$ and $p(y_{\bi_j}|z_j) = \mathbbm{1}\big((2y_{\bi_j}-1)z_j \ge 0\big)$. 

Solving $\frac{\delta \hat{L}_2}{\delta q(z_j)}$ being $0$ with Lagrange multipliers, we find that the optimal $q(z_j)$ is a truncated Gaussian,
\begin{align}
q^*(z_j) \propto \N(z_j | c_{jj} \expt{z_j} + \sum_{t \neq j} c_{tj}  \expt{z_t},1)\mathbbm{1}\big((2y_{\bi_j}-1)z_j \ge 0\big). \nonumber
\end{align}
This expression is unfortunately not analytical. Even if we can explicitly update each $q(z_j)$, the updating will depend on all the other variational posteriors $\{q(z_t)\}_{t\neq j}$, making distributed calculation very difficult. This arises from the quadratic term  $\frac{1}{2}(\K_{BS}\expt{\z})^\top$\\$(\K_{BB}+\A_1)^{-1}(\K_{BS}\expt{\z})$ in \eqref{eq:t_lb_b_m}, which couples all $\{\expt{z_j}\}_j$.

To resolve this issue, we introduce an extra variational parameter $\blambda$ to decouple the dependencies between $\{\expt{z_j}\}_j$ using the following lemma. 
\begin{lem}
	For any symmetric positive definite matrix $\E$, 
	\begin{align}
	\bupeta^\top \E^{-1} \bupeta \ge 2 \blambda^\top \bupeta - \blambda^\top \E \blambda. \label{ieq-1}
	\end{align} 
	The equality is achieved when $\blambda = \E^{-1} \bupeta$. 
\end{lem}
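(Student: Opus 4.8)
The plan is to recognize the right-hand side of \eqref{ieq-1} as the value of a concave quadratic in $\blambda$ whose unconstrained maximum over $\blambda$ equals the left-hand side; equivalently, to prove the inequality directly by completing the square. Since $\E \succ 0$, the function $g(\blambda) = 2\blambda^\top\bupeta - \blambda^\top\E\blambda$ is strictly concave, so its maximizer is obtained by setting the gradient $2\bupeta - 2\E\blambda$ to zero, giving $\blambda = \E^{-1}\bupeta$. This already identifies the claimed equality point, and it tells us which perfect square to form.

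The cleanest route to the inequality itself is to exhibit the gap $\bupeta^\top\E^{-1}\bupeta - \big(2\blambda^\top\bupeta - \blambda^\top\E\blambda\big)$ as a single nonnegative quadratic form. Concretely, I would compute $(\blambda - \E^{-1}\bupeta)^\top \E (\blambda - \E^{-1}\bupeta)$ and expand it into four terms. Using the symmetry of $\E$ (hence of $\E^{-1}$) to merge the two scalar cross terms $\blambda^\top\bupeta$ and $\bupeta^\top\blambda$, and simplifying $\E\E^{-1}=\I$, this expression collapses exactly to $\blambda^\top\E\blambda - 2\blambda^\top\bupeta + \bupeta^\top\E^{-1}\bupeta$, which is precisely the gap above.

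Since $\E \succ 0$, the form $(\blambda - \E^{-1}\bupeta)^\top \E (\blambda - \E^{-1}\bupeta)$ is nonnegative for every $\blambda$, which yields \eqref{ieq-1}; moreover, because $\E$ is strictly positive definite the form vanishes if and only if $\blambda - \E^{-1}\bupeta = \0$, establishing that equality holds exactly when $\blambda = \E^{-1}\bupeta$. There is no real obstacle here beyond bookkeeping: the only points to treat with care are invoking the symmetry of $\E$ to collapse the cross terms, and using positive definiteness (rather than mere invertibility) both to fix the direction of the inequality and to pin down the equality case.
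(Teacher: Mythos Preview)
Your argument is correct: completing the square gives the gap as $(\blambda - \E^{-1}\bupeta)^\top \E (\blambda - \E^{-1}\bupeta) \ge 0$, and strict positive definiteness pins down the equality case. The paper, however, takes a different route. It views $f(\bupeta) = \bupeta^\top \E^{-1}\bupeta$ as a convex function and invokes the Fenchel--Young inequality $f(\bupeta) \ge \blambda^\top\bupeta - g(\blambda)$, computes the conjugate $g(\blambda) = \tfrac{1}{4}\blambda^\top\E\blambda$ by maximizing $\blambda^\top\bupeta - f(\bupeta)$ over $\bupeta$, and then replaces the free variable $\blambda$ by $2\blambda$ to arrive at \eqref{ieq-1}. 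Your approach is more elementary and self-contained, needing only symmetry and positive definiteness of $\E$; the paper's approach situates the bound within the convex-duality framework that motivates introducing $\blambda$ in the first place (the surrounding derivation explicitly appeals to convex conjugates to decouple the $q(z_j)$'s), so it connects more transparently to the variational machinery even if it is a slightly longer path to this particular inequality.
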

\begin{proof}
	Define the function $f(\bupeta) = \bupeta^\top \E^{-1} \bupeta$ and it is easy to see that $f(\bupeta)$ is convex because $\E^{-1} \succ 0$. Then using the convex conjugate, we have $f(\bupeta) \ge \blambda^\top \bupeta - g(\blambda)$ and $g(\blambda) \ge \bupeta^\top \blambda - f(\bupeta)$. Then by maximizing  $\bupeta^\top \blambda - f(\bupeta)$, we can obtain $g(\blambda) = \frac{1}{4}\blambda^\top \E \blambda$. Thus, $f(\bupeta)\ge \blambda^\top \bupeta - \frac{1}{4}\blambda^\top \E \blambda$. Since $\blambda$ is a free parameter, we can use $2\blambda$ to replace $\blambda$ and obtain the inequality \eqref{ieq-1}. Further, we can verify that when $\blambda = \E^{-1} \bupeta$ the equality is achieved. 
\end{proof}
We now apply the inequality on the term $\frac{1}{2}(\K_{BS}\expt{\z})^\top(\K_{BB}+\A_1)^{-1}\K_{BS}\expt{\z}$ in \eqref{eq:t_lb_b_m}. Note that the quadratic term regarding all $\{z_j\}$ now vanishes, and instead a linear term $\blambda^\top \K_{BS}\expt{\z}$ is introduced so that these annoying dependencies between $\{z_j\}_j$ are eliminated. We therefore obtain a more friendly intermediate ELBO, 
\begin{align}
&\tilde{L}_2(\Ucal,\B, q(\z), \blambda) =  \frac{1}{2}\log |\K_{BB}| - \frac{1}{2}\log|\K_{BB} + \A_1| - \frac{1}{2} \sum\nolimits_j\expt{z_j^2} - \frac{1}{2} a_3  \nonumber \\
&+ \frac{1}{2}\tr(\K_{BB}^{-1}{\A}_1) - \frac{N}{2}\log(2\pi) + \sum\nolimits_j\blambda^\top k(\B, \x_{\bi_j})\expt{z_j} - \frac{1}{2}\blambda^\top(\K_{BB}+\A_1)\blambda  \nonumber \\
& + \sum\nolimits_j \int q(z_j)\log(\frac{p(y_{\bi_j}|z_j)}{q(z_j)})\d z_j -\frac{1}{2}\sum_{k=1}^K \| \U^{(k)} \|^2_F. \label{eq:t_lb_b_m2}
\end{align}

The functional derivative with respect to $q(z_j)$ is then given by
\[
\frac{\delta \tilde{L}_2}{\delta q(z_j)} = \log \frac{p(y_{\bi_j}|z_j)}{q(z_j)} - 1 - \frac{1}{2}z_j^2 + \blambda^\top k(\B, \x_{\bi_j}) z_j.
\]
Now solving $\frac{\delta \tilde{L}_2}{\delta q(z_j)} = 0$, we see that the optimal variational posterior has an analytical form: 
\begin{align}
q^*(z_j) \propto \N(z_j | \blambda^\top k(\B,x_{\bi_j}), 1)\mathbbm{1}\big((2y_{\bi_j}-1)z_j \ge 0\big).\nonumber
\end{align}
Plugging each $q^*(z_j)$ into \eqref{eq:t_lb_b_m2}, we finally obtain the tight ELBO as presented in \textbf{Theorem} 4.2 of the main paper:
\begin{align}
&\log\big(p(\y,\Ucal|\B)\big) \ge L_2^*(\Ucal, \B, \blambda) =  \frac{1}{2}\log |\K_{BB}| - \frac{1}{2}\log|\K_{BB} + {\A}_1| - \frac{1}{2} {a}_3\nonumber \\
&+ \sum_j \log\big(\Phi((2y_{\bi_j}-1)\blambda^\top k(\B, \x_{\bi_j}))\big)- \frac{1}{2}\blambda^\top\K_{BB}\blambda  + \frac{1}{2}\tr(\K_{BB}^{-1}{\A}_1) \nonumber \\
&-\frac{1}{2}\sum_{k=1}^K \| \U^{(k)} \|^2_F. \label{eq:t_lb_b-c}
\end{align}

\section{Gradients of the Tight ELBO}\label{sect:grad}
In this section, we present how to calculate the gradients of the tight ELBOs in \eqref{eq:t_lb_r-c} and \eqref{eq:t_lb_b-c} with respect to the latent factors $\Ucal$, the inducing points $\B$ and the kernel parameters.  

Let us first consider the tight ELBO for continuous data. Because $\Ucal$, $\B$ and the kernel parameters are all inside the terms involving the kernel functions, such as $\K_{BB}$ and $\A_1$, we calculate the gradients with respect to these terms first and then use the chain rule to calculate the gradients with respect to $\Ucal$ and $\B$ and the kernel parameters. Specifically, we  consider the derivatives with respect to $\K_{BB}$, $\A_1$, $a_3$ and $\a_4$. Using matrix derivatives and algebras \citep{minka2000old}, we obtain 
\begin{align}
\d L_1^* &= \frac{1}{2}\tr\big((\K_{BB}^{-1} - (\K_{BB} + \beta\A_1)^{-1})\d\K_{BB}\big)  -\frac{\beta}{2}\tr\big((\K_{BB} + \beta\A_1)^{-1}\d \A_1\big) \nonumber \\
&- \frac{\beta}{2} \d a_3 - \frac{\beta}{2}\tr(\K_{BB}^{-1}\A_1\K_{BB}^{-1}\d\K_{BB}) + \beta^2\tr(\a_4^\top(\K_{BB} + \beta\A_1)^{-1}\d\a_4) \nonumber \\
&+ \frac{\beta}{2}\tr(\K_{BB}^{-1}\d \A_1) - \frac{1}{2}\beta^2\tr\big((\K_{BB} + \beta\A_1)^{-1}\a_4\a_4^\top(\K_{BB} + \beta\A_1)^{-1}\d \K_{BB}\big) \nonumber \\
& - \frac{1}{2}\beta^3\tr\big((\K_{BB} + \beta\A_1)^{-1}\a_4\a_4^\top(\K_{BB} + \beta\A_1)^{-1}\d\A_1\big). \label{gd1}
\end{align}

Next, we calculate the derivatives  $\d \K_{BB}$, $\d \A_1$, $\d a_3$ and $\d \a_4$, which  depend on the specific kernel function form used in the model. For example, if we use the linear kernel, $\d \K_{BB} = 2\B^\top\d\B$ and $\d \A_1 = \sum\nolimits_{j=1}^N k(\B,\x_{\bi_j})( \x_{\bi_j} \d\B^\top +\d \x_{\bi_j} \B^\top ) + (\d\B \x_{\bi_j}^\top + \B \d \x_{\bi_j}^\top)k(\x_{\bi_j},\B)$ where $\x_{\bi_j} = [\u^{(1)}_{i_{j1}}, \ldots, \u^{(K)}_{i_{jK}}]$. Note that because $\A_1$, $a_3$ and $\a_4$ all have additive structures which involve individual tensor entry $\bi_j$ ($1 \le j \le N$) and the major computation of the derivatives in \eqref{gd1} also involve similar summations, the computation of the final gradients with respect to $\Ucal$ and $\B$ and the kernel parameters can easily be performed in parallel. 
\cmt{
	Finally, the gradient of $L_1^*$ with respect to $\beta$ is given by
	\begin{align}
	\frac{\d L_1^*}{\d \beta} &= \frac{N}{2}\frac{1}{\beta} - \frac{1}{2}\tr\big((\K_{BB} + \beta\A_1)^{-1}\A_1\big) - \frac{1}{2}a_2 - \frac{1}{2}a_3 +\beta \a_4^\top (\K_{BB} + \beta\A_1)^{-1}\a_4 \nonumber \\
	&+\frac{1}{2}\tr(\K_{BB}^{-1}\A_1)- \frac{1}{2}\beta^2 \a_4^\top (\K_{BB} + \beta\A_1)^{-1}\A_1(\K_{BB} + \beta\A_1)^{-1}\a_4.
	\end{align}
}

The gradient calculation for the tight ELBOs for binary tensors is very similar to the continuous case. Specifically, we obtain
\begin{align}
\d L_2^* &= \frac{1}{2}\tr\big(\K_{BB}^{-1} - (\K_{BB} + \A_1)^{-1}\d \K_{BB}\big) -\frac{1}{2}\tr\big((\K_{BB} + \A_1)^{-1}\d \A_1\big) \nonumber \\
&-\frac{1}{2}\d a_3- \frac{1}{2}\tr(\K_{BB}^{-1}\A_1\K_{BB}^{-1}\d\K_{BB}) + \frac{1}{2}\tr(\K_{BB}^{-1}\d \A_1) - \frac{1}{2}\tr(\blambda\blambda^\top\d \K_{BB}) \nonumber \\
&+ \sum_{j=1}^N (2y_{\bi_j}-1)\frac{\N\big(\blambda^\top k(\B, \x_{\bi_j})|0,1\big)}{\Phi\big((2y_{\bi_j}-1)\blambda^\top k(\B, \x_{\bi_j})\big)} \blambda^\top \d k(\B, \x_{\bi_j}).
\end{align}
We can then calculate the derivatives  $\d \K_{BB}$, $\d \A_1$, $\d a_3$ and each $\d k(\B, \x_{\bi_j})(1 \le j \le N)$ and then apply the chain rule to calculate the gradient with respect to $\Ucal$, $\B$ and the kernel parameters.  

\section{Fixed Point Iteration for $\blambda$}\label{sect:fixed}
In this section, we give the convergence proof of the fixed point iteration of the variational parameters $\blambda$ in the tight ELBO for binary tensors. While $\blambda$ can be jointly optimized via gradient based approaches with $\Ucal$, $\B$ and the kernel parameters, we empirically find that combining this fixed point iteration can converge much faster. The fixed point iteration is given by 
\begin{align}
\blambda^{(t+1)} = (\K_{BB} + \A_1)^{-1} (\A_1 \blambda^{(t)} + \a_5)\label{eq:fix_point-c}
\end{align}
where  
\begin{align}
\A_1 &= \sum\nolimits_j k(\B, \x_{\bi_j}) k(\x_{\bi_j}, \B), \nonumber \\
\a_5 &= \sum_j  k(\B, \x_{\bi_j})(2y_{\bi_j}-1)\frac{\N\big(k(\B, \x_{\bi_j})^\top \blambda^{(t)}|0,1\big)}{\Phi\big((2y_{\bi_j}-1) k(\B, \x_{\bi_j})^\top \blambda^{(t)}\big)}.\nonumber
\end{align}

We now show that the fixed point iteration not only always converges, but also improves the ELBO in \eqref{eq:t_lb_b-c} after every update of $\blambda$ (see \textbf{Lemma } 4.3 in the main paper).

Specifically, given $\Ucal$ and $\B$, from Section \ref{sect:tvelbo} we have
\[
L_2^*\big(\blambda^{(t)}\big) = \max\nolimits_{q(\z)} \tilde{L}_2\big(\blambda^{(t)}, q(\z)\big) = \tilde{L}_2\big(\blambda^{(t)}, q_{\blambda^{(t)}}(\z)\big)
\]
where $q_{\blambda^{(t)}}(\z)$ is the optimal variational posterior: $q_{\blambda^{(t)}}(\z)=\prod_j q_{\blambda^{(t)}}(z_j)$ and $q_{\blambda^{(t)}}(z_j) \propto \N(z_j | k(\B, \x_{\bi_j})^\top \blambda^{(t)}, 1)\mathbbm{1}\big((2y_{\bi_j}-1)z_j \ge 0\big)$. 

Now let us fix $q_{\blambda^{(t)}}(\z)$ and derive the optimal $\blambda$ by solving $\frac{\partial \tilde{L}_2}{\partial \blambda} = 0$. We then obtain the update of $\blambda$: $\blambda^{(t+1)} = (\K_{BB} + \A_1)^{-1}\big(\sum_j k(\B, \x_{\bi_j})\expt{z_j}\big)$ where $\expt{z_j}$ is the expectation of the optimal variational posterior of $z_j$ given $\blambda^{(t)}$, \ie $q_{\blambda^{(t)}}(z_j)$. Obviously, we have 
\[
\tilde{L}_2\big(\blambda^{(t)}, q_{\blambda^{(t)}}(\z)\big) \le \tilde{L}_2\big(\blambda^{(t+1)}, q_{\blambda^{(t)}}(\z)\big). 
\]
Further, because $L_2^*(\blambda^{(t)}) = \tilde{L}_2\big(\blambda^{(t)},q_{\blambda^{(t)}}(\z)\big)$ and
\[
\tilde{L}_2\big(\blambda^{(t+1)}, q_{\blambda^{(t)}}(\z)\big) \le \tilde{L}_2\big(\blambda^{(t+1)}, q_{\blambda^{(t+1)}}(\z)\big) =  L_2^*(\blambda^{(t+1)})
\]
we conclude that $L_2^*(\blambda^{(t)}) \le L_2^*(\blambda^{(t+1)})$. Now, we plug the fact that $\expt{z_j} = w_j^{(t)} + k(\B, \x_{\bi_j})^\top \blambda^{(t)}$  given $q_{\blambda^{(t)}}(z_j)$ into the calculation of $\blambda^{(t+1)}$, merge and arrange the terms. We then obtain the fixed point iteration for $\blambda$ in \eqref{eq:fix_point-c}. Finally since $L_2^*$ is upper bounded by the log model evidence, the fixed point iteration always converges. 

\end{document}